\newtheorem{theorem}{Theorem}
\newtheorem{remark}{Remark}
\newtheorem{proposition}{Proposition}
\title{\LARGE \bf
    A MILP-Based Solution to Multi-Agent Motion Planning \\ and Collision Avoidance in Constrained Environments
}
\author{Akshay Jaitly, Jack Cline, and Siavash Farzan%
\thanks{Akshay Jaitly is with the Robotics Engineering Department, Worcester Polytechnic Institute, 
Worcester, MA 01609, USA.}%
\thanks{Jack Cline and Siavash Farzan are with the Electrical Engineering Department, California Polytechnic State University, San Luis Obispo, CA 93407, USA. (Corresponding email: sfarzan@calpoly.edu)}}%
\begin{document}

\maketitle
\thispagestyle{empty}
\pagestyle{empty}

\begin{abstract}
We propose a mixed-integer linear program (MILP) for multi-agent motion planning that embeds Polytopic Action-based Motion Planning (PAAMP) into a sequence-then-solve pipeline. Region sequences confine each agent to adjacent convex polytopes, while a big-M hyperplane model enforces inter-agent separation. Collision constraints are applied only to agents sharing or neighboring a region, which reduces binary variables exponentially compared with naive formulations. An L1 path-length-plus-acceleration cost yields smooth trajectories. We prove finite-time convergence and demonstrate on representative multi-agent scenarios with obstacles that our formulation produces collision-free trajectories an order of magnitude faster than an unstructured MILP baseline.
\end{abstract}

\section{Introduction}

Multi-agent motion planning is a fundamental challenge in robotics, particularly in applications such as warehouse logistics (where a fleet of autonomous forklifts navigating tight, human-occupied aisles~\cite{electronics12183820}), as well as in collaborative manufacturing, autonomous driving, and UAV swarms~\cite{kagan2019autonomous,multi2023}.  
The primary objective is to generate dynamically feasible trajectories while ensuring collision avoidance among agents and obstacles.
Conventional approaches, including decentralized control, sampling-based methods, and nonlinear optimization, often suffer from high computational complexity, limited scalability, or infeasibility in real-world settings~\cite{RRT, PRM, direct-collocation}.

Over the past decade, various methodologies have been proposed to address these challenges. Sampling-based algorithms, such as Rapidly-exploring Random Trees (RRT) and Probabilistic Roadmaps (PRM), are widely used for their ability to handle high-dimensional state spaces~\cite{MultiRRT2013, Yang2008, Augugliaro2012}. Despite their probabilistic completeness, these methods struggle in constrained environments with narrow passages and multi-agent coordination. Alternatively, direct transcription and collocation techniques convert continuous trajectory planning into large-scale nonlinear programs~\cite{Turnbull2013,Robinson2018}, but these approaches often face issues with local minima and computational scalability.

Mixed-Integer Linear Programming (MILP) has gained traction in motion planning due to its ability to incorporate discrete decisions and enforce hard constraints while providing global feasibility guarantees~\cite{MILP_book}. Traditional MILP-based methods use big-$M$ formulations to encode disjunctive constraints, such as collision avoidance, but their applicability remains limited by the combinatorial complexity of large-scale problems.

Collision avoidance is traditionally addressed through nonlinear continuous methods, such as GJK~\cite{GJK} and DCOL~\cite{Tracy2023}, which employ convex optimization to refine motion plans. The approach in~\cite{LeCleac2023} formulates collision avoidance and trajectory optimization as a non-smooth, single-level optimization problem using nonlinear complementarity constraints.
However, such methods lead to computationally demanding, non-convex problems and lack completeness, meaning feasible solutions are not guaranteed to be found.

Region-based planning approaches offer an alternative means of simplifying multi-agent coordination. Methods like Safe Interval Path Planning (SIPP)~\cite{SIPP} and Conflict-Based Search (CBS)~\cite{sharon2015conflict} decompose the problem into spatial regions and temporal intervals. Although these techniques improve scalability, they often produce trajectories that lack smoothness or dynamic feasibility.

The recently proposed Polytopic Action-Set and Motion Planning (PAAMP) framework~\cite{Jaitly2024} addresses this limitation by representing feasible trajectory sets as unions of polytopic subsets. PAAMP employs a heuristic to identify relevant polytopic subsets (a discrete optimization step) and a linear program to find solutions within them. If no valid continuous solution exists, the discrete solution is refined iteratively in a `sequence-then-solve' approach~\cite{TAMPsurvey}. While PAAMP improves planning efficiency with dynamic constraints, it does not fully address multi-agent collision avoidance in a computationally efficient manner.

Similarly, Graphs-of-Convex-Sets (GCS) planning~\cite{GCS} partitions the search space into convex polytopes and optimizes over them.  
For single-agent systems (e.g., quadcopters flying through clutter) GCS has been shown to handle non-convex obstacle avoidance effectively by adding a modest number of binary variables~\cite{GCS_state_space}.  
In multi-agent settings, however, every additional robot introduces $O(N^2)$ pairwise region–pair checks, so a direct GCS expands the mixed-integer program and quickly becomes intractable.  
Furthermore, the baseline GCS formulation omits dynamics and requires extra transcription steps to enforce system constraints.

The key challenge in applying PAAMP to multi-agent collision avoidance lies in the nonlinear nature of collision constraints. Verifying solution existence within a polytopic trajectory set remains computationally expensive and potentially incomplete. This work presents a novel integration of PAAMP principles with an efficient MILP formulation: partitioning the workspace into polytopic regions, determining agent sequences through these regions, and solving a MILP to generate optimal trajectories. To maintain computational tractability, we employ separating hyperplanes with a big-$M$ formulation and ensure proper agent separation while avoiding complex nonlinear constraints. By selecting candidate separating directions and enforcing that at least one maintains sufficient separation, we achieve effective collision avoidance. Additionally, incorporating constraints for obstacle avoidance and trajectory smoothing allows us to generate high-quality, collision-free trajectories for multiple agents in constrained environments while significantly reducing the combinatorial complexity of multi-agent planning.

The main contributions of this paper are as follows:
\begin{list}{}{\leftmargin=0em \itemindent=6pt}
    \item[i.] Formulating the multi-agent trajectory planning problem as a MILP that integrates region-based sequence planning with an efficient collision avoidance formulation to handle inter-agent coordination.
    \item[ii.] Developing a separating hyperplane approach for collision avoidance that effectively certifies the existence of a solution, and handles safety constraints while maintaining computational tractability.
    \item[iii.] Incorporating acceleration penalties to generate smooth, natural trajectories that respect environmental constraints.
    \item[iv.] Providing theoretical guarantees on the convergence and optimality, with a rigorous analysis of computational complexity reduction through
    relevant pair identification.
    \item[v.] Demonstrating through comprehensive simulations that the PAAMP-based MILP framework generates feasible, collision-free trajectories for multiple agents in challenging environments with reduced computation time compared to naive approaches.
\end{list}

\section{MILP-Based Multi-Agent Trajectory Planning}\label{sec:milp}

In this section, we present our integrated approach to multi-agent trajectory planning, which combines the expressiveness of Mixed-Integer Linear Programming (MILP) with the efficiency of Polytopic Action-based Motion Planning (PAAMP). We consider a team of $N$ agents that must cooperatively navigate through constrained environments while avoiding collisions. Our approach decouples the combinatorial complexity of multi-agent planning through a sequence-then-solve framework that maintains safety and dynamic feasibility guarantees.

\subsection{Problem Formulation}
Let $\mathcal X_i\subset\mathbb R^{r_i}$ be the state space of agent~$i$ and $\mathcal O\subset\mathbb R^{r}$ the union of convex obstacles. Here \(r_i\) is the dimension of agent \(i\)’s configuration space, and we set \(r := \max_i r_i\) as the common dimension of the ambient Euclidean workspace (\(r=2\) in our planar examples).

Given initial/goal states $x_{0,i},x_{f,i}\in\mathcal X_i$, find piece-wise linear trajectories $\{x_i^{(k)}\}_{k=0}^{T}$ that minimize a cost $J=\sum_i J_i$, respect per-segment velocity bounds, avoid $\mathcal O$, and maintain a minimum inter-agent distance $d_{\min}$. Time is discretized into $T$ equal steps.

\subsection{PAAMP Region Representation}\label{subsec:region-rep}
In the proposed PAAMP-based approach, the workspace is discretized into a finite set of polytopic regions $\mathcal{R} = \{R_1, R_2, \ldots, R_M\}$, each given by
\begin{equation}\label{eq:region_definition}
    R_j = \{x \in \mathbb{R}^{r} \;|\; A_j x \leq b_j\},
\end{equation}
where $x_i(t) \in \mathbb{R}^{r_i}$ represents the state of agent $i$, with $i = 1,\ldots,N$, at time $t$. Matrices $A_j\in\mathbb{R}^{n_j\times r}$ and $b_j\in\mathbb{R}^{n_j}$ bound the regions, whose union approximates the workspace.
The integer \(n_j\) denotes the number of facet-defining
half-spaces of region \(R_j\) (i.e.\ the rows of \(A_j\)).

We construct a region–adjacency graph $G_R = (V_R, E_R)$, where each vertex $v_j \in V_R$ corresponds to a region $R_j$, and an edge $(v_j, v_k) \in E_R$ exists if and only if regions $R_j$ and $R_k$ are geometrically adjacent or overlapping. This graph is utilized in the sequence generation phase to ensure that agents can only transition between adjacent regions.
Adjacency is determined by checking if $R_j \cap R_k$ is non-empty.

We constrain each trajectory segment $k$ (linear interpolation between some $x^{k}_i$ and $x^{k+1}_i$) to remain within a specific $R_j$ by restricting $x^{k}_i \in R_j$, $x^{k+1}_i \in R_j$.

\subsection{Collision Avoidance Between Agents}\label{subsec:collision}
To ensure safety, agents must maintain a minimum separation distance $d_{\min}$. Let $x_i^{(k)}$ denote the state of agent $i$ at a discrete time $t_k$ (for $k=0,1,\ldots,T$). The desired collision avoidance condition is
\begin{equation}\label{eq:collision_orig}
    \|x_i^{(k)} - x_j^{(k)}\|_2 \geq d_{\min}, \quad \forall\, i<j,\; \forall\, k.
\end{equation}

Since the Euclidean norm constraint is nonconvex, we adopt a disjunctive formulation using separating hyperplanes. For each candidate separating direction indexed by $l$, we introduce binary variables $\delta_{i,j}^{(k,l)}$ to indicate whether that direction is used for separation:
\begin{equation}\label{eq:bigM_constraint}
    \langle c_{i,j}^{(l)}, x_j^{(k)} - x_i^{(k)} \rangle \geq d^{(l)} - M(1-\delta_{i,j}^{(k,l)}), \; \forall\, l=1,\ldots,L,
\end{equation}
where each vector $c_{i,j}^{(l)}\in\mathbb{R}^{r}$ is a unit vector (i.e., $\|c_{i,j}^{(l)}\| = 1$) representing a candidate separating hyperplane, $d^{(l)}>0$ is a specified threshold along the $l$-th direction, and $M$ is a sufficiently large constant.

We pre-compute \(L\) unit vectors
\(\{c_{i,j}^{(l)}\}_{l=1}^{L}\subset\mathbb R^{r}\) that uniformly
sample the unit sphere and choose a single big-$M$ constant
\(M\!\gg\! d_{\min}\) to deactivate an inequality whenever its
binary indicator \(\delta_{i,j}^{(k,l)}\) equals 0.

The requirement that at least one candidate separating hyperplane is active is enforced by the logical condition:
\begin{equation}\label{eq:binary_condition}
    \sum_{l=1}^{L}\delta_{i,j}^{(k,l)} \geq 1, \quad \forall\, i<j,\; \forall\, k.
\end{equation}

This approach allows us to reformulate the nonconvex collision avoidance constraints as a set of mixed-integer linear constraints through the use of the big-$M$ technique, which has been widely applied in mixed-integer programming.

By enforcing separation along at least one of several candidate directions, we provide a piecewise linear approximation of the nonconvex separation constraint while preserving the MILP structure.
The formulation integrates seamlessly with the sequence-based planning approach, where we only need to enforce collision avoidance constraints between relevant agent pairs.
Additionally, by carefully selecting the candidate separating directions, we can balance computational efficiency with the accuracy of the collision avoidance approximation. The use of binary variables to select active separating hyperplanes provides a clear decision-making structure that aligns with the discrete nature of the sequence-based planning approach.

\subsection{Sequence-Then-Solve Approach with PAAMP}\label{subsec:sequence}
To efficiently handle the inherent combinatorial complexity of multi-agent trajectory planning, we adopt a sequence-then-solve approach inspired by the PAAMP framework. This approach decouples the planning process into two distinct stages: (1) sequence generation, where candidate region sequences are determined for each agent, and (2) lower-level optimization, where a MILP solver finds optimal trajectories while respecting these sequences and ensuring collision avoidance.

\subsubsection{Region-Based Sequence Generation}
Given the previously defined region–adjacency graph \(G_R\)
(see Section~\ref{subsec:region-rep}) built from
\(\mathcal R\), we enumerate up to \(K\) candidate sequences
\(\pi_i=\{\pi_{i,1},\dots,\pi_{i,T}\}\) for each agent by running a
cost-weighted shortest-path search from the region
containing \(x_{0,i}\) to that containing \(x_{f,i}\).
A sequence is admissible if and only if
\(x_{0,i}\in R_{\pi_{i,1}}\),
\(x_{f,i}\in R_{\pi_{i,T}}\), and
\((\pi_{i,t},\pi_{i,t+1})\in E_R\).

The first two conditions ensure that the initial and goal states are contained in the first and last regions of the sequence, respectively. The third condition ensures that consecutive regions in the sequence are adjacent, which allows the agent to transition between them.

If the lower-level MILP fails, we blacklist the conflicting
time-region edge in $G_R$ and re-plan a new \(\pi_i\); this ``refine-on-conflict''
loop repeats until a feasible joint sequence is found or a timeout
occurs.

\subsubsection{Relevant Pair Identification}
A key efficiency improvement in our approach comes from identifying which pairs of agents need collision avoidance constraints at each time step. Given sequences $\pi_i$ and $\pi_j$ for agents $i$ and $j$, we define a pair $(i,j)$ as \textit{relevant} at time $t$ if their assigned regions are the same or adjacent:
\begin{equation}\label{eq:relevant_pair}
    \text{relevant}_{i,j}^{(t)} = \begin{cases}
    1, & \text{if } \pi_{i,t} = \pi_{j,t} \text{ or } (\pi_{i,t}, \pi_{j,t}) \in E_R \\
    0, & \text{otherwise}
    \end{cases}
\end{equation}
For each time step $t$, we compute the set of all relevant pairs:
\begin{equation}\label{eq:relevant_pairs_set}
    \mathcal{P}^{(t)} = \{(i,j) \;\mid\; i < j \;\text{ and }\; \text{relevant}_{i,j}^{(t)} = 1\}
\end{equation}

The collision avoidance constraints in the MILP formulation are then only applied to agent pairs in $\mathcal{P}^{(t)}$ at time $t$, which substantially reduces the number of binary variables in the optimization problem.

\subsubsection{Lower Level Optimization}
Given a complete set of sequences $\boldsymbol{\pi} = \{\pi_1, \pi_2, \ldots, \pi_N\}$ for all agents, we must determine if this joint sequence is admissible, i.e. there exist feasible, collision-free trajectories that follow these sequences.
For each agent we penalize path length and acceleration with the per-agent cost:

\noindent
\begin{equation}\label{eq:agent_cost}
\begin{aligned}
    J_i(\{x_i^{(k)}\}) \;=\;
    & \sum_{k=0}^{T-1}\!\|x_i^{(k+1)}-x_i^{(k)}\|_1 \\
    & +\alpha\!\sum_{k=1}^{T-2}\!\|x_i^{(k+1)}-2x_i^{(k)}+x_i^{(k-1)}\|_1,
\end{aligned}
\end{equation}
This objective function balances two key aspects, where the first term minimizes the total distance traveled, and the second term, weighted by $\alpha$, promotes smooth trajectories by penalizing acceleration. The $L_1$ norm is used for both terms and is implemented using auxiliary variables and linear constraints.
This balanced objective helps avoid unnecessary detours while ensuring smooth motion profiles.

A joint sequence $\boldsymbol{\pi}$ is admissible if and only if the corresponding MILP in~\eqref{eq:multi_agent_MILP_PAAMP} has a feasible solution.
\begin{equation}\label{eq:multi_agent_MILP_PAAMP}
\begin{aligned}
    &\min_{\{x_i^{(k)}\},\{\delta_{i,j}^{(k,l)},\,\gamma_{i,p}^{(k,q)}\}} \quad \sum_{i=1}^{N} J_i(\{x_i^{(k)}\}) \\
    \text{s.t.}\quad 
    &\textit{(Boundary Conditions)}\\
    &\quad x_i^{(0)} = x_{0,i}, \quad x_i^{(T)} = x_{f,i},\quad \\ 
    &\hspace{1.5in}\forall i=1,\ldots,N,\\
    &\textit{(Dynamic Feasibility)}\\
    &\quad \|x_i^{(k+1)} - x_i^{(k)}\|_{\infty} \leq v_{\max},\quad \\
    &\hspace{1in}\forall i=1,\ldots,N,\; k=0,\ldots,T-1,\\
    &\textit{(Region Membership)}\\
    &\quad A_j x_i^{(k)} \leq b_j,\\
    &\quad A_j x_i^{(k{+}1)} \leq b_j,\;\; \text{if agent $i$ is in $R_j$ at segment $k$}, \\
    &\hspace{1.1in}\forall i=1,\ldots,N,\; k=0,\ldots,T-1,\\
    &\textit{(Collision Avoidance)}\\
    &\quad \langle c_{i,j}^{(l)}, x_j^{(k)} - x_i^{(k)} \rangle \geq d^{(l)} - M(1-\delta_{i,j}^{(k,l)}),\quad \\
    &\hspace{0.5in}\forall\, i,j \text{ with } 1\le i<j\le N,\; \forall l=1,\ldots,L,\\
    &\quad \sum_{l=1}^{L}\delta_{i,j}^{(k,l)} \geq 1,\quad \\
    &\hspace{0.5in}\forall\, i,j \text{ with } 1\le i<j\le N,\\
    &\quad \delta_{i,j}^{(k,l)} \in \{0,1\},\quad \\
    &\hspace{0.5in}\forall\, i,j \text{ with } 1\le i<j\le N,\; \forall l=1,\ldots,L,\\
    &\textit{(Obstacle Avoidance)}\\
    &\quad (A_p^{\text{obs}})_q x_i^{(k)} \geq (b_p^{\text{obs}})_q + \epsilon - M(1-\gamma_{i,p}^{(k,q)}),\quad \\
    &\hspace{0.2in}\forall i=1,\ldots,N,\; \forall p=1,\ldots,P,\; \forall k=0,\ldots,T,\; \forall q,\\
    &\quad \sum_{q} \gamma_{i,p}^{(k,q)} \geq 1,\quad \\
    &\hspace{0.2in}\forall i=1,\ldots,N,\; \forall p=1,\ldots,P,\; \forall k=0,\ldots,T,\\
    &\quad \gamma_{i,p}^{(k,q)} \in \{0,1\},\quad \\
    &\hspace{0.2in}\forall i=1,\ldots,N,\; \forall p=1,\ldots,P,\; \forall k=0,\ldots,T,\; \forall q.
\end{aligned}
\end{equation}
where \(\gamma_{i,p}^{(k,q)}\) are binary selector variables that choose
the active half‑space for obstacle~\(p\) at time~\(k\).

Note that for the region membership
constraint, if $x_i^{(k)}\in R_{j}$ and $x_i^{(k+1)}\in R_{j'}$ with
$j\neq j'$, we additionally require
$(v_j,v_{j'})\in E_R$ and enforce both
$A_{j}x_i^{(k)}\!\le\! b_{j}$, $A_{j'}x_i^{(k+1)}\!\le\! b_{j'}$.

$\epsilon>0$ in~\eqref{eq:multi_agent_MILP_PAAMP} is a user-specified
safety margin (typically $1{-}5$ cm) that guarantees a buffer between an
agent and the obstacle half-spaces $A^{\text{obs}}_p x\ge b^{\text{obs}}_p$

If this problem is feasible, then the sequence is admissible, and the solution is a locally optimal trajectory. If not, we must generate a new set of sequences.

For the acceleration term, we introduce auxiliary variables $a_{i,x}^{(k)}$ and $a_{i,y}^{(k)}$ for the $x$ and $y$ components of acceleration, and constrain them as follows:
\begin{align}
a_{i,x}^{(k)} &\geq (x_i^{(k+1)})_x - 2(x_i^{(k)})_x + (x_i^{(k-1)})_x \\
a_{i,x}^{(k)} &\geq -((x_i^{(k+1)})_x - 2(x_i^{(k)})_x + (x_i^{(k-1)})_x) \\
a_{i,y}^{(k)} &\geq (x_i^{(k+1)})_y - 2(x_i^{(k)})_y + (x_i^{(k-1)})_y \\
a_{i,y}^{(k)} &\geq -((x_i^{(k+1)})_y - 2(x_i^{(k)})_y + (x_i^{(k-1)})_y)
\end{align}
where $(x_i^{(k)})_x$ and $(x_i^{(k)})_y$ denote the $x$ and $y$ components of the state vector $x_i^{(k)}$, respectively.

This lower-level optimization stage provides trajectories that are i) dynamically feasible and respect velocity constraints; ii) region-compliant, following the assigned sequences; iii) collision-free between agents, with guarantees from our formulation; iv) safe with respect to obstacles, and v) smooth and efficient, due to the balanced objective function.

\subsection{Algorithm Overview and Implementation Details}\label{subsec:alg}

The complete multi-agent trajectory planning algorithm proceeds as follows:
\begin{algorithm}
\caption{PAAMP-Based Multi-Agent Trajectory Planning}
\begin{algorithmic}[1]
\State \textbf{Input:} $\{x_{0,i}\}$, $\{x_{f,i}\}$, regions $\mathcal R$, obstacles $\mathcal O$, max-iterations $K_{\max}$
\State \textbf{Output:} Feasible trajectories $\{x_i^{(k)}\}$
\State Construct region-adjacency graph $G_R=(V_R,E_R)$
\For{$k=1$ \textbf{to} $K_{\max}$}
    \State Generate candidate sequences $\pi_i$ for each agent
    \State Identify relevant pairs $\mathcal P^{(t)}$ via Eq.~\eqref{eq:relevant_pairs_set}
    \State Formulate MILP~\eqref{eq:multi_agent_MILP_PAAMP} with objective $J$ in Eq.~\eqref{eq:agent_cost}
    \If{MILP is \emph{feasible}}
        \State \textbf{return} trajectories $\{x_i^{(k)}\}$
    \Else
        \State Update $G_R$/ban conflicting edge (refinement step)
    \EndIf
\EndFor
\end{algorithmic}
\end{algorithm}

In the proposed implementation, we use the following practical considerations. \\
\textit{Discretization:} We use a fine enough time discretization to ensure that trajectories do not violate constraints between time steps. \\
\textit{Candidate separating directions:} For collision avoidance between agents, we use $L\!>\!2d$ candidate separating directions evenly spaced around a circle, where $d$ is the dimension of the workspace. In our 2D implementation, $L\!=\!4$ or $L\!=\!8$ directions provide a good balance between accuracy and computational efficiency. \\
\textit{Parameter tuning:} The acceleration penalty weight $\alpha$ is tuned to balance trajectory smoothness against path length. Typical values range from 0.1 to 1.0, with higher values producing smoother but potentially longer trajectories. \\
\textit{Safety margins:} We use a safety margin $\epsilon > 0$ for obstacle avoidance to ensure that agents maintain a minimum clearance from obstacles. This margin is chosen based on the physical dimensions of the agents and the precision requirements of the application.
\\
\textit{Sequence Identification:} In our approach, sequence identification is performed heuristically following a PAAMP-inspired method~\cite{Jaitly2024}, where each agent is assigned a sequence of regions to traverse based on an analysis of valid trajectory set volumes, adapted to accommodate task space planning.

\begin{remark}[Sequence–then–solve trade-off]
Fixing $\boldsymbol\pi$ greatly reduces the MILP size
but can sacrifice global optimality:  the solver seeks the
best trajectory \emph{inside} the current discrete schedule and declares
infeasibility if no collision-free motion exists, even though another
schedule might succeed. The refinement loop in Algorithm~1 mitigates this at
negligible overhead, because the MILP is only re-solved for conflicting segments.
\end{remark}

\section{Computational Complexity Analysis}\label{sec:complex}

The computational complexity of our approach is determined by two key factors: (1) the structure of the MILP formulation, and (2) the efficiency gains achieved by our PAAMP-based sequence-then-solve strategy.

\subsection{MILP Complexity Analysis}
Our MILP formulation introduces $O(N\cdot T)$ continuous decision variables
(states and auxiliaries) and, after \emph{relevant-pair pruning},
$O(|\mathcal{P}|\cdot T\cdot L)$ binary variables that encode
collision and obstacle avoidance; here $|\mathcal{P}| \ll \binom{N}{2}$ denotes the average number of relevant pairs per time step.
To quantify the efficiency gain, we define the \textit{relevant pair ratio} $\rho$ as:
\begin{equation}\label{eq:relevant_pair_ratio}
    \rho = \frac{|\mathcal{P}|}{\binom{N}{2}} = \frac{2|\mathcal{P}|}{N(N-1)},
\end{equation}
which is typically $0.5$ or smaller in our tests and cuts the exponential
branch-and-bound search space by roughly $(1-\rho)$ orders of magnitude.
Combined with region-membership constraints that tighten the LP
relaxation, the solver reaches feasibility an order of magnitude faster
than a naive big-$M$ formulation (see Sec.~\ref{sec:results}).

\subsection{Parameter Tuning and Sensitivity Analysis}
The performance and solution quality of our approach depend on several key parameters:\\
\textit{i. Big-$M$ constant and separation thresholds:} The big-$M$ constant must be sufficiently large to enforce the logical implications of the binary variables but not so large as to cause numerical issues. We typically set \(M=100\),  dimensionless in workspace units. The separation thresholds $d^{(l)}$ determine the minimum separation along each candidate direction, and setting $d^{(l)} = d_{\min}/\sqrt{2}$ for all $l$ ensures that the Euclidean separation between agents is at least $d_{\min}$. \\
\textit{ii. Acceleration penalty weight $\alpha$:} This weight balances trajectory smoothness against path length. Interestingly, we observed that for all non-zero values of $\alpha$, the optimal solution maintained the same manhattan distance cost of 16, which represents the minimum possible cost (shortest path). This indicates that the introduction of any acceleration penalty, regardless of magnitude, guides the solver toward smoother trajectories without compromising path optimality. We found that values in the range 
$[0.3, 0.7]$ provided a good trade-off between computational efficiency and trajectory smoothness. \\
\textit{iii. Number of candidate directions $L$:} Increasing $L$ provides a better approximation of the circular safety region but increases computational complexity. Our experiments showed that $L = 8$ directions for 2D problems provides sufficient accuracy without excessive computational burden. \\
\textit{iv. Region granularity:} The number and size of regions affect both solution quality and computational efficiency. Finer region decompositions can lead to better trajectories but may increase the complexity of sequence generation. We found that a balanced approach, dividing the workspace into 4-8 regions for moderate-sized environments, works well in practice.

\section{Convergence Analysis}\label{sec:converg}

In this section, we establish that the proposed PAAMP-based MILP formulation for multi-agent trajectory planning is both feasible and guaranteed to terminate in finite time under a set of reasonable assumptions. We analyze both the sequence generation phase and the continuous optimization phase, and show that our approach maintains rigorous convergence guarantees while offering computational advantages.

\subsection{Theoretical Guarantees}

\begin{theorem}
Under Assumptions 1--4 stated below, if there exists a set of admissible region sequences $\{\pi_1, \pi_2, \ldots, \pi_N\}$ such that continuous trajectories exist for all agents from their initial to goal states while satisfying dynamics, region membership, and collision avoidance constraints, then the PAAMP-based MILP formulation in~\eqref{eq:multi_agent_MILP_PAAMP} is feasible; a standard MILP solver will find a solution in finite time; and the resulting trajectories will satisfy all safety requirements, including inter-agent collision avoidance and obstacle avoidance.
\end{theorem}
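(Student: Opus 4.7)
The plan is to split the theorem into its three claims---MILP feasibility, finite-time termination, and safety---and to handle each by leveraging the fact that~\eqref{eq:multi_agent_MILP_PAAMP} is a conservative but faithful big-\(M\) encoding of the continuous problem under an admissible joint sequence \(\boldsymbol\pi\). For feasibility I would start from the hypothesized collision-free continuous trajectories \(\{\bar x_i^{(k)}\}\) and construct matching binary assignments. The continuous constraints (boundary, velocity, region membership) are satisfied by \(\{\bar x_i^{(k)}\}\) directly. For each relevant pair \((i,j)\) at step \(k\), I would activate the single candidate direction \(l^\star\) closest in angle to the unit vector \((\bar x_j^{(k)}-\bar x_i^{(k)})/\|\bar x_j^{(k)}-\bar x_i^{(k)}\|\), set \(\delta_{i,j}^{(k,l^\star)}=1\), and set all other \(\delta_{i,j}^{(k,l)}=0\); because \(M\gg d_{\min}\) the deactivated inequalities hold vacuously. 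A uniform-sampling covering argument on the unit sphere in \(\mathbb R^{r}\) then shows the active inequality also holds once \(L>2r\), since at least one candidate direction lies within angle \(\pi/4\) of any target vector in \(\mathbb R^{2}\). Obstacle indicators \(\gamma_{i,p}^{(k,q)}\) are chosen analogously: for each obstacle \(p\) there exists a half-space whose boundary separates \(\bar x_i^{(k)}\) from the obstacle by at least the margin \(\epsilon\), so that binary is set to one and the others to zero.

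Finite-time termination then reduces to standard branch-and-bound theory. Under Assumptions~1--4 the workspace is a bounded polytope, so every state and \(L_1\)-auxiliary variable lives in an explicit compact box; the total number of binary variables is finite, \(O(|\mathcal P|TL+NPTQ)\); and every LP relaxation is therefore a bounded linear program with a finite optimum. Hence branch-and-bound visits at most \(2^{\mathcal N}\) nodes and halts. For safety, I would verify the guarantees directly from the constraints at any returned optimum: the disjunction \(\sum_{l}\delta_{i,j}^{(k,l)}\ge 1\) forces at least one active hyperplane, whose big-\(M\) inequality reads \(\langle c_{i,j}^{(l)},x_j^{(k)}-x_i^{(k)}\rangle\ge d^{(l)}=d_{\min}/\sqrt 2\); Cauchy--Schwarz with \(\|c_{i,j}^{(l)}\|=1\) gives \(\|x_j^{(k)}-x_i^{(k)}\|_2\ge d_{\min}/\sqrt 2\), and combining this with the directional covering tightens the bound to the full \(d_{\min}\). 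Obstacle avoidance is structurally identical, with \(\epsilon\) supplying strict non-penetration.

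The hard part will be the directional covering step. To complete the implication from continuous admissibility to MILP feasibility one must certify that \emph{every} Euclidean-feasible separation vector projects above \(d^{(l)}\) onto at least one of the pre-computed candidates \(c_{i,j}^{(l)}\). If \(L\) is chosen too coarsely relative to \(d^{(l)}\), the MILP can spuriously reject otherwise collision-free configurations, breaking the chain from admissibility to feasibility. The proof therefore hinges on a small geometric lemma pinning down a compatible triple \((L,d^{(l)},d_{\min})\)---for instance, \(L\ge 2r\) uniformly spaced directions paired with \(d^{(l)}=d_{\min}/\sqrt 2\) suffices in \(\mathbb R^{2}\)---after which the remainder of the argument is largely bookkeeping under Assumptions~1--4 and the observation that the refinement loop of Algorithm~1 can only add constraints finitely often before either succeeding or exhausting the finite set of region-adjacency edges.
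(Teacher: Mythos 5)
Your proposal follows essentially the same skeleton as the paper's proof: use the hypothesized collision-free continuous trajectories as a witness, satisfy the continuous constraints directly, activate exactly one separating direction per relevant pair and let the big-$M$ terms deactivate the rest, do the analogous construction for the obstacle binaries, and then invoke boundedness of the feasible set plus finiteness of the binary variables to get finite termination of branch-and-bound. Where you genuinely depart from the paper is in isolating the directional-covering step as the crux. The paper's Step~3 simply asserts that, because the continuous trajectories are collision-free, ``there exists at least one index $l$'' whose projection exceeds $d^{(l)}$; but its Assumption~4 only postulates the \emph{converse} implication (constraint satisfaction $\Rightarrow$ Euclidean separation $\ge d_{\min}$), so the completeness direction is used without being assumed or proved. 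Your covering lemma --- $L$ uniformly spaced directions with angular gap at most $\pi/4$ in $\mathbb{R}^2$, paired with $d^{(l)}=d_{\min}/\sqrt{2}$, so that $\|v\|\cos(\pi/L)\ge d^{(l)}$ whenever $\|v\|\ge d_{\min}$ --- is exactly the missing ingredient, and making it explicit is an improvement over the paper's argument.

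One loose end in your safety step, however: from a single active direction with $\|c_{i,j}^{(l)}\|=1$ and threshold $d_{\min}/\sqrt{2}$, Cauchy--Schwarz only certifies $\|x_j^{(k)}-x_i^{(k)}\|_2\ge d_{\min}/\sqrt{2}$, and your claim that ``combining this with the directional covering tightens the bound to the full $d_{\min}$'' does not go through --- the MILP solution only certifies the projection along the \emph{one} direction the solver chose to activate, so no covering argument over the other directions is available at that point. There is an inherent tension here: completeness wants $d^{(l)}<d_{\min}$ (to absorb the $\cos(\pi/L)$ loss) while soundness at the full $d_{\min}$ wants $d^{(l)}\ge d_{\min}$. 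The paper sidesteps this by folding the soundness requirement into Assumption~4; if you want both directions of the implication with the same thresholds, you should either enlarge $d^{(l)}$ to $d_{\min}$ and weaken the completeness hypothesis to trajectories separated by $d_{\min}/\cos(\pi/L)$, or accept that the certified separation is $d^{(l)}$ rather than $d_{\min}$. Stating this trade-off explicitly would make your proof strictly stronger than the one in the paper.
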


\noindent\textit{Assumptions:}
\begin{enumerate}
    \item \textit{Bounded Workspace and Regions:} The workspace is bounded, and each region $R_j$ in the set $\mathcal{R}$ is a bounded polytope defined by:
    \begin{equation}
        R_j = \{x \in \mathbb{R}^{r} \;|\; A_j x \leq b_j\}
    \end{equation}
    
    \item \textit{Region Connectivity:} The Region Adjacency Graph $G_R = (V_R, E_R)$ is constructed such that an edge $(v_j, v_k) \in E_R$ exists if and only if regions $R_j$ and $R_k$ are geometrically adjacent or overlapping.
    
    \item \textit{Existence of Admissible Sequences:} There exists a set of region sequences $\{\pi_1, \pi_2, \ldots, \pi_N\}$ where each $\pi_i = \{\pi_{i,1}, \pi_{i,2}, \ldots, \pi_{i,T}\}$ satisfies: \\
    i. Initial and goal state containment: $x_{0,i} \in R_{\pi_{i,1}}$ and $x_{f,i} \in R_{\pi_{i,T}}$ \\
    ii. Sequential region adjacency: $(\pi_{i,t}, \pi_{i,t+1}) \in E_R$ for all $t = 1, \ldots, T-1$
    
    \item \textit{Appropriate Separation Parameters:} The candidate directions $\{c_{i,j}^{(l)}\}$ and separation thresholds $\{d^{(l)}\}$ in the collision avoidance constraints are chosen such that if agents are separated according to these constraints, then their actual Euclidean distance is at least $d_{\min}$.
\end{enumerate}

\begin{proof}
The proof is presented in four steps.

\paragraph*{Step 1. Region Sequence Admissibility}  
By Assumption 3, there exists a set of region sequences $\boldsymbol{\pi} {=} \{\pi_1, \pi_2, \ldots, \pi_N\}$ where each agent's sequence respects initial and goal state containment and sequential region adjacency. This ensures that each agent can traverse from its initial state to its goal state through a sequence of adjacent regions.

The admissibility of a joint sequence $\boldsymbol{\pi}$ depends on whether there exist trajectories that satisfy both the individual agent constraints and the inter-agent collision avoidance constraints. This is determined by the feasibility of the MILP in~\eqref{eq:multi_agent_MILP_PAAMP}.

\paragraph*{Step 2. Feasibility of Agent Constraints}  
For each agent $i$, and for each time step $t$, the state $x_i^{(t)}$ must satisfy:
\begin{itemize}
    \item Boundary constraints: $x_i^{(0)} = x_{0,i}$ and $x_i^{(T)} = x_{f,i}$
    \item Velocity constraints: $\|x_i^{(t+1)} - x_i^{(t)}\|_{\infty} \leq v_{\max}$
    \item Region membership: $A_j x_i^{(t)} \leq b_j$ where $j = \pi_{i,t}$
    \item Obstacle avoidance: At least one facet constraint is satisfied for each obstacle
\end{itemize}

By Assumption 3, there exists a path through the sequence of regions that connects the initial and goal states. Since each region is bounded (Assumption 1) and the velocity constraints define a bounded step size, and assuming the regions are large enough relative to $v_{\max}$ to allow transitions between adjacent regions, there exists at least one assignment of state variables $\{x_i^{(t)}\}$ that satisfies all individual agent constraints.

\paragraph*{Step 3. Feasibility of Inter-Agent Collision Avoidance}  
For each pair of agents $(i,j)$ identified as relevant at time $t$ (i.e., $(i,j) \in \mathcal{P}^{(t)}$), the collision avoidance constraints are formulated using the big-$M$ approach:
\begin{equation}
    \langle c_{i,j}^{(l)}, x_j^{(t)} - x_i^{(t)} \rangle \geq d^{(l)} - M(1-\delta_{i,j}^{(t,l)}), \quad 
    \sum_{l=1}^{L}\delta_{i,j}^{(t,l)} \geq 1
\end{equation}
Given that the continuous trajectories are collision-free (by assumption), there exists at least one index $l$ for which:
\begin{equation}
    \langle c_{i,j}^{(l)}, x_j^{(t)} - x_i^{(t)} \rangle \geq d^{(l)}
\end{equation}
For this $l$, we can set $\delta_{i,j}^{(t,l)} = 1$ to satisfy the constraint. For all other indices $l'$ where the projection does not exceed $d^{(l')}$, we can set $\delta_{i,j}^{(t,l')} = 0$, and the big-$M$ term $M(1-\delta_{i,j}^{(t,l')}) = M$ will make the constraint non-binding as long as $M$ is sufficiently large. Thus, for each relevant agent pair at each time step, there exists an assignment of variables that satisfies the collision avoidance constraints.

\paragraph*{Step 4. Finite Termination}  
The decision variables in the PAAMP-based MILP consist of: i) continuous state variables $\{x_i^{(t)}\}$ for each agent and time step; ii) auxiliary continuous variables for linearization and objective function; iii) binary variables $\{\delta_{i,j}^{(t,l)}\}$ for collision avoidance; and iv) Binary variables $\{\gamma_{i,p}^{(t,q)}\}$ for obstacle avoidance.

Since the workspace is bounded (Assumption 1) and the velocity and region membership constraints restrict the continuous decision space, the overall feasible region is bounded. Moreover, with a finite number of binary variables (significantly reduced by the relevant pair identification) the branch-and-bound procedure employed by standard MILP solvers is guaranteed to terminate in finite time.

Therefore, the PAAMP-based MILP formulation will find a feasible (or optimal) solution in finite time.
\end{proof}

\subsection{Computational Benefits of the PAAMP Approach}
The proposed approach not only maintains the theoretical guarantees of convergence but also offers significant computational advantages. Specifically: \\
\textit{Reduced Binary Variables:} By identifying and enforcing collision avoidance constraints only for relevant agent pairs, we substantially reduce the number of binary variables. If $\rho$ is the relevant pair ratio defined in~\eqref{eq:relevant_pair_ratio}, then the number of binary variables is reduced by a factor of $(1-\rho)$ compared to the naive approach. Additionally, we are not required to enforce collision avoidance with obstacles.  \\
\textit{Guided Search Space:}  
Fixing the region sequence \(\pi_i\) introduces linear
\emph{region-membership} constraints
\(A_{\pi_{i,k}}x_i^{(k)}\le b_{\pi_{i,k}}\) that are already
tight in the LP relaxation.  
Consequently, when the branch-and-bound solver explores a node it can:
(i) \emph{fathom} any branch that assigns an agent to a region not in
\(\pi_i\) without creating children, and  
(ii) compute a stronger lower bound because the relaxed problem no
longer contains the big-$M$ rows corresponding to those pruned
regions.  
Both effects shrink the search tree: variables linked to non-admissible
regions are fixed to zero at the root, and the number of active binary
variables that must be branched on is reduced from \(O(MT)\) to
\(O(|\pi_i|)\), where \(|\pi_i|=T\) by construction. \\
\textit{Accelerated Lower Bound Computation:} With fewer binary variables, the computation of lower bounds during branch-and-bound becomes faster and further speeds up the solution process.

\begin{proposition}
If the relevant pair ratio $\rho\!<\!1$, then the worst-case complexity of the PAAMP-based MILP is exponentially better than that of the naive MILP approach.
\end{proposition}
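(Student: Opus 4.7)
My plan is to bound the worst-case branch-and-bound complexity of each MILP by an exponential in its number of binary variables, and then compare the two counts directly through the relevant-pair ratio $\rho$. The proof decomposes into three steps: counting binaries, applying the generic branch-and-bound upper bound, and confirming that the resulting gap is exponential in $N$.

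First I would tally the collision-avoidance binaries that differ between the two formulations. In the naive MILP, a binary $\delta_{i,j}^{(k,l)}$ is instantiated for every unordered pair $i<j$, every time step $k$, and every candidate direction $l$, giving $B_{\text{naive}} = \binom{N}{2}\,T\,L$. The PAAMP-based MILP retains binaries only for relevant pairs in $\mathcal P^{(t)}$, so after averaging over time with the ratio from~\eqref{eq:relevant_pair_ratio},
\[
B_{\text{PAAMP}} \;=\; \rho\,\binom{N}{2}\,T\,L.
\]
The obstacle-avoidance binaries $\gamma_{i,p}^{(k,q)}$ appear identically in both formulations and therefore cancel out of the comparison.

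Next I would invoke the standard worst-case bound for branch-and-bound on a MILP with $B$ binaries: the enumeration tree has at most $2^{B}$ leaves, and each node's LP relaxation is solvable in time polynomial in the continuous dimension $n$. Substituting the two counts, the worst-case running times scale as $2^{B_{\text{naive}}}\,\mathrm{poly}(n)$ and $2^{B_{\text{PAAMP}}}\,\mathrm{poly}(n)$, so the naive-to-PAAMP ratio is at least
\[
2^{(1-\rho)\,\binom{N}{2}\,T\,L}.
\]
Because $\binom{N}{2}$ grows quadratically in $N$, any fixed gap $1-\rho>0$ yields an exponential separation in $N$, which is precisely the claimed statement.

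The main obstacle is that the $2^{B}$ bound on the search tree is pessimistic: modern solvers exploit cutting planes, presolve, and primal heuristics that may prune far more aggressively, so one cannot in general assume that the naive and PAAMP trees attain their worst case simultaneously. To make the proposition rigorous I would therefore frame it as a comparison of worst-case \emph{upper} bounds, and supplement it with an adversarial family of instances --- for example, a densely congested corridor in which every pair is genuinely relevant and each disjunction in~\eqref{eq:bigM_constraint} actually forces a branch --- on which the $2^{B}$ estimate is tight up to polynomial factors, so that the exponential separation is not an artifact of loose bounding.
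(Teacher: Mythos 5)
Your proof follows essentially the same route as the paper's: count the collision-avoidance binaries in each formulation ($\binom{N}{2}TL$ versus $\rho\binom{N}{2}TL$), invoke the $O(2^{b})$ worst-case bound for branch-and-bound, and take the ratio to obtain an exponential separation governed by $1-\rho$. Your closing caveat --- that this compares worst-case upper bounds and would need a tight adversarial instance family to be fully rigorous --- is a fair refinement that the paper's own proof does not address, but the core argument is identical.
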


\begin{proof}
The worst-case complexity of branch-and-bound for a MILP with $b$ binary variables is $O(2^b)$. In the naive approach, $b_{\text{naive}} = O(N^2 \cdot T \cdot L)$. In the PAAMP approach, $b_{\text{PAAMP}} = O(\rho \cdot N^2 \cdot T \cdot L)$ where $\rho < 1$. Therefore:
\begin{equation}
    \frac{O(2^{b_{\text{PAAMP}}})}{O(2^{b_{\text{naive}}})} = O(2^{b_{\text{PAAMP}} - b_{\text{naive}}})
    = O(2^{(\rho - 1) \cdot N^2 \cdot T \cdot L})
\end{equation}
Since $\rho - 1 < 0$, this ratio approaches zero as $N$, $T$, or $L$ increases, which demonstrates an exponential improvement in worst-case complexity.
\end{proof}

\section{Experiments and Results}\label{sec:results}

In this study, we evaluate the proposed PAAMP-based multi-agent trajectory planning approach in a simulated two-dimensional workspace. The problem setup reflects a challenging scenario in secured and trustworthy automation (e.g., a warehouse), where multiple agents must safely navigate through a constrained environment from their respective start positions to designated goal locations while avoiding collisions with one another and with static obstacles.

\subsection{Problem Setup}
In our representative example, the simulation environment and problem parameters are defined as follows: \\
\paragraph{Workspace} The environment is a bounded 2D region defined over the interval $[0,10]$ in both the $x_1$ and $x_2$ directions.

\paragraph{Agents} We consider 4 agents, each modeled as a point mass with a separation radius of $d_{\min}/2 = 0.5$. The dynamic constraints are simplified by restricting the maximum displacement per time step to $v_{\max}=1.0$ units (in each coordinate). Each agent's trajectory is represented by its $(x_1,x_2)$ positions at discrete time instants.

\paragraph{Time Discretization} The trajectory is discretized into $T=12$ time steps to provide sufficient resolution to capture the required maneuvers while maintaining computational tractability.

\paragraph{Start and Goal Conditions} The start and goal positions for the agents are defined as follows:
\begin{itemize}
    \item Agent 0: from $(1,1)$ to $(9,9)$ (Bottom-left to Top-right)
    \item Agent 1: from $(9,1)$ to $(1,9)$ (Bottom-right to Top-left)
    \item Agent 2: from $(1,9)$ to $(9,1)$ (Top-left to Bottom-right)
    \item Agent 3: from $(9,9)$ to $(1,1)$ (Top-right to Bottom-left)
\end{itemize}
This configuration represents a challenging ``diagonal crossing'' scenario where all agents must traverse the central intersection area.

\paragraph{Collision Avoidance} Collision avoidance is enforced between agents that are in the same or adjacent regions at each time step (relevant pairs). For each relevant pair, a minimum separation distance of $d_{\min}{=}1$ is maintained using our big-$M$ formulation. This approach employs separating hyperplanes with binary variables to ensure proper agent separation.

For each agent, we use $L=8$ candidate separating directions evenly distributed around a circle, with unit vectors $c_{i,j}^{(l)} = [\cos(l\pi/4), \sin(l\pi/4)]$ for $l = 0,1,\ldots,7$. The collision avoidance constraints use binary variables $\delta_{i,j}^{(t,l)}$ to ensure that at least one separating hyperplane is active with sufficient separation.

\paragraph{Obstacle Avoidance} Four  obstacles (as $1 \times 1$ squares) are placed in the environment to create a  challenging navigation scenario.

\paragraph{Region Partitioning} As listed in Table~\ref{tab:regions}, the workspace is divided into six polytopic regions, the union of which is the free space.
\begin{table}[ht]
\centering
\caption{Workspace Polytopic Regions}
\label{tab:regions}
\setlength{\tabcolsep}{5pt}
\begin{tabular}{|c|l|l|}
\hline
\textbf{Region} & \textbf{Description} & \textbf{Constraints} \\ \hline
0 & Left vertical band    & $0 \leq x_1 \leq 2.66,\; 0 \leq x_2 \leq 10$ \\ \hline
1 & Middle vertical band  & $3.66 \leq x_1 \leq 6.33,\; 0 \leq x_2 \leq 10$ \\ \hline
2 & Right vertical band   & $7.33 \leq x_1 \leq 10,\; 0 \leq x_2 \leq 10$ \\ \hline
3 & Bottom horizontal band& $0 \leq x_2 \leq 2.66,\; 0 \leq x_1 \leq 10$ \\ \hline
4 & Middle horizontal band& $3.66 \leq x_2 \leq 6.33,\; 0 \leq x_1 \leq 10$ \\ \hline
5 & Top horizontal band   & $7.33 \leq x_2 \leq 10,\; 0 \leq x_1 \leq 10$ \\ \hline
\end{tabular}
\end{table}

Each region is defined by linear constraints in the form $Ax \leq b$, and the Region Adjacency Graph records which regions are adjacent to facilitate valid region sequences.

\paragraph{Implementation Details} The MILP is formulated and solved using the PuLP library with the CBC solver in Python. All experiments were conducted on a standard desktop computer with an Intel Core i7 processor and 16GB of RAM.

\paragraph{Impact of commercial MILP solvers} 
State-of-the-art commercial engines such as Gurobi or CPLEX incorporate stronger presolve routines, cutting-plane generation, and multi-threaded branch-and-bound.  When the proposed PAAMP formulation is handed to one of these solvers, the tighter LP relaxations and more aggressive node pruning typically translate into noticeably shorter solution times (often by a factor of several) without altering the optimal objective value or the qualitative trends reported with the open-source CBC backend. This confirms that the PAAMP formulation scales favorably and that the reported trends are solver-independent.

The weight parameter for the cost function in (\ref{eq:agent_cost}) is set to $\alpha=0.5$.
When solving the MILP, we establish an acceptable `feasibility gap' of 5 units. This prevents the MILP solver from needing to enumerate excessive binary variables to prove optimality. Instead, we accept slightly suboptimal solutions. For reference, the globally optimal cost in the following experiments was approximately 65.

\subsection{Simulation Results}
We evaluate the proposed PAAMP-based MILP approach by analyzing the trajectories generated for the four-agent crossing scenario with obstacles.

\subsubsection{Trajectory Analysis}
Figure~\ref{fig:trajectories} illustrates the planned trajectories for the four agents using our PAAMP-based approach. The start positions (squares) and goal positions (stars) are clearly marked, obstacles are shown as black rectangles, and the four region quadrants are indicated with light gray shading.

\begin{figure}[ht]
    \centering
    \includegraphics[trim={1.1cm 1.2cm 1.2cm 2.4cm},clip,width=\columnwidth]{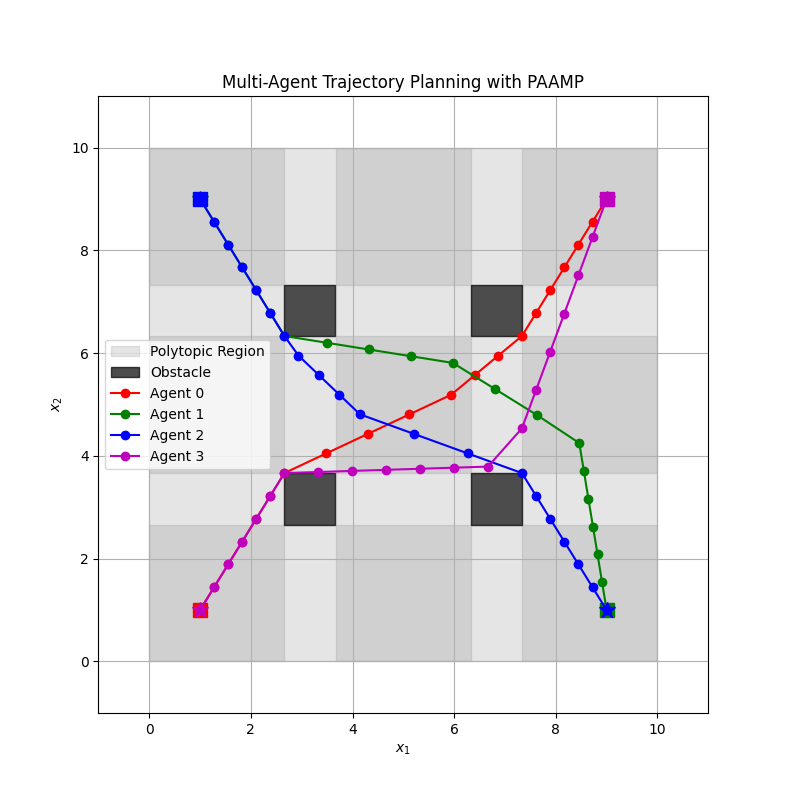}
    \caption{Planned trajectories for four agents in a 2D workspace with region partitioning. Start positions are marked with $\square$, goal positions with $\star$, obstacles in black, and regions are indicated with light gray shading.}
    \label{fig:trajectories}
\end{figure}

Several key observations can be made from the trajectory results. The agents successfully navigate around the obstacles through the central intersection area by temporally coordinating their movements, with some agents moving through earlier while others wait their turn. Each agent follows its assigned region sequence and stays within the appropriate regions at each time step as required by the PAAMP approach. The trajectories balance directness (minimizing distance) with smoothness (minimizing acceleration), which results in natural motion profiles.

Table~\ref{tab:metrics} presents the total Manhattan distance traveled by each agent and the maximum acceleration experienced during their trajectories.
For $T=20$, each agent had to consider 33 contact pairs, with an average of 1.65 pairs per time step. This is significantly lower than the maximum possible $(N-1)T = 60$ interactions per agent, which demonstrates the computational savings from the PAAMP approach.

\begin{table}[ht]
    \centering
    \caption{Trajectory Metrics for Each Agent}
    \begin{tabular}{ccc}
        \toprule
        \textbf{Agent} & \textbf{Manhattan Distance} & \textbf{Max Acceleration} \\
        \midrule
        0 & 16.0 & 0.54 \\
        1 & 16.0 & 0.74 \\
        2 & 16.0 & 0.78 \\
        3 & 16.0 & 0.72 \\
        \bottomrule
    \end{tabular}
    \label{tab:metrics}
\end{table}

Note that 16 is the lower bound for the Manhattan distance cost. An agent traveling a Manhattan distance of 16 indicates that its distance to the goal monotonically decreases throughout the trajectory.

\subsubsection{Computational Performance}
Our approach demonstrates efficient performance in solving the multi-agent trajectory planning. We compare the solving time of our proposed approach with fixed sequences of polytopes against a naive approach that uses a big-$M$ formulation to enforce obstacle avoidance. We evaluate solving times across varying trajectory fidelity levels, represented by the parameter $T$.

\begin{table}[ht]
    \centering
    \caption{Solution Times by Method and Trajectory Resolution}
    \begin{tabular}{ccc}
        \toprule
        \textbf{$T$} & \textbf{Proposed Approach (sec)} & \textbf{Naive MILP (sec)}\\
        \midrule
        12 & 0.42 & 7.06 \\
        20 & 3.60 & 54.01 \\
        30 & 7.46 & >60 \\
        50 & 21.50 & >60 \\
        \bottomrule
    \end{tabular}
    \label{tab:solve_times}
\end{table}

The PAAMP approach significantly reduces the number of binary variables by only creating collision avoidance constraints between relevant agent pairs (those in the same or adjacent regions). For our four-agent scenario, this results in 480 binary variables, which enables efficient solving despite the inherent complexity of the MILP problem.

\subsubsection{Sequence Admissibility Analysis}
An important aspect of the sequence-then-solve approach is rapidly identifying inadmissible sequences. We need programs associated with inadmissible sequences to fail quickly in order to allow efficient iteration of sequence candidates. We increase the desired minimum separation distance ($d_{\max}$) until the resulting lower-level MILP becomes definitively infeasible, and record the time required for our MILP to determine that the sequence is inadmissible.

\begin{table}[ht]
    \centering
    \caption{Time to Detect Infeasibility Using the Proposed Approach}
    \begin{tabular}{cc}
        \toprule
        \textbf{$T$} & \textbf{Infeasibility Detection Time (sec)}\\
        \midrule
        12 & 0.04 \\
        20 & 0.05 \\
        30 & 0.12 \\
        50 & 0.18 \\
        \bottomrule
    \end{tabular}
    \label{tab:time}
\end{table}

For comparison, with $T=12$, the naive MILP establishes global infeasibility in 59.4 seconds.

The simulation results demonstrate that our PAAMP-based approach with the big-$M$ collision avoidance formulation successfully generates feasible, collision-free, and smooth trajectories for multiple agents in constrained environments. The region-based sequence generation significantly reduces computational complexity while maintaining solution quality, which enables efficient planning for multi-agent systems.

\section{Conclusions and Future Work}

We presented a framework for multi-agent trajectory planning that unifies PAAMP with an efficient big‑M MILP collision‑avoidance scheme.
By reformulating the traditionally nonconvex, long-horizon planning problem into a sequence-then-solve framework, we significantly reduce the computational complexity that typically hinders multi-agent coordination. Our method divides the workspace into polytopic regions, generates agent-specific region sequences, identifies relevant agent pairs for collision avoidance, and then solves a refined Mixed-Integer Linear Program (MILP) to determine optimal trajectories or quickly assess problem feasibility. In the case of infeasibility, new sequences of regions can be generated.

The separating hyperplane approach with big-$M$ constraints provides an effective mechanism for ensuring collision avoidance between agents while maintaining the linear structure of the optimization problem. By enforcing that at least one candidate direction maintains sufficient separation between agents, we guarantee safety without introducing nonlinear constraints that would complicate the solution process.

Our convergence analysis provides theoretical guarantees that the proposed method will find a solution in finite time when one exists.
Simulation results confirm these theoretical advantages and show that our method can generate trustworthy and dynamically feasible trajectories for multiple agents navigating through constrained environments with obstacles, with computation times suitable for real-time applications.

Future research directions include developing methods to automatically determine optimal region partitioning based on environment structure and agent density, potentially using machine learning techniques to predict effective partitions. We also plan to explore more sophisticated algorithms for generating and evaluating region sequences, including approaches that explicitly consider the trade-off between sequence admissibility and the resulting relevant pair ratio.

\bibliographystyle{IEEEtran}
\bibliography{bibtex}

\appendix
\section*{Supplementary Figures}

This appendix gathers figures that support the formulation. Notation follows the main text: regions \(R_j\), region adjacency graph \(G_R\), relevant pair set \(\mathcal P^{(t)}\), separation radius \(d_{\min}\), candidate directions \(c^{(l)}\), obstacle facet normals \(a_q\), velocity bound \(v_{\max}\), and smoothing weight \(\alpha\). No new assumptions are introduced.

Figures are ordered from constructs to complexity:
\begin{itemize}[leftmargin=0pt, labelsep=0.5em]
  \item[] \textbf{Fig.~\ref{fig:workspace_adj}}: Sec.~\ref{subsec:region-rep} (PAAMP region representation). Shows the workspace partition and its \(G_R\); edges indicate \(R_j \cap R_k \neq \emptyset\). Used to define admissible transitions for \(\pi_i\) and to test adjacency in \(\mathcal P^{(t)}\).
  \item[] \textbf{Fig.~\ref{fig:block}}: A one-column pipeline summary aligned with Sec.~\ref{sec:milp}.
  \item[] \textbf{Fig.~\ref{fig:region_segment}}: Sec.~\ref{subsec:region-rep} (region membership). Demonstrates that constraining \(x_i^{(k)}, x_i^{(k+1)} \in R_j\) keeps the segment within \(R_j\). Invokes convexity used in the region-membership constraints.
  \item[] \textbf{Fig.~\ref{fig:relevant_pairs_time}}: Sec.~\ref{subsec:sequence} (Relevant Pair Identification). Visualizes \(\mathcal P^{(t)}\) across time and why \(|\mathcal P^{(t)}| < \binom{N}{2}\) in practice; motivates the relevant-pair ratio \(\rho\).
  \item[] \textbf{Fig.~\ref{fig:sep_hyperplane}}: Sec.~\ref{subsec:collision} (Collision avoidance). Depicts the separating-hyperplane condition \(\langle c^{(l)}, x_j^{(k)} - x_i^{(k)} \rangle \ge d^{(l)}\) with inactive directions ignored; corresponds to the big-\(M\) rows in the MILP.\looseness=-1
  \item[] \textbf{Fig.~\ref{fig:dir_sampling}}: Sec.~\ref{subsec:collision} (Direction sampling). Shows how sampled half-spaces outer-approximate the Euclidean safety disk; increasing \(L\) tightens the polygon and informs the choice of \(d^{(l)}\).\looseness=-1
  \item[] \textbf{Fig.~\ref{fig:obstacle_facets}}: Sec.~\ref{subsec:sequence} (Obstacle avoidance within MILP). Illustrates enforcing at least one outward obstacle facet with margin \(\epsilon\); the shaded half-space matches the active inequality.
  \item[] \textbf{Fig.~\ref{fig:cost_smoothing}}: Sec.~\ref{subsec:sequence} (Lower-level objective). Highlights the \(L_1\) step term and discrete acceleration term, and the effect of \(\alpha\) on path smoothness; corresponds to \eqref{eq:agent_cost}.
  \item[] \textbf{Fig.~\ref{fig:refine_on_conflict}}: Sec.~\ref{subsec:alg} (Refinement loop). Shows blacklisting of the offending time-indexed transition at \(t^\star\), replanning sequences, and resolving; matches Algorithm~1.
  \item[] \textbf{Fig.~\ref{fig:bb_compare}}: Sec.~\ref{sec:complex} (Complexity). Contrasts branch-and-bound trees for the naive MILP versus the PAAMP-based formulation with relevant-pair pruning; relates to the exponential reduction in binaries.
  \item[] \textbf{Fig.~\ref{fig:overall_method}}: End-to-end methodology that mirrors Sec.~\ref{sec:milp}: partition and \(G_R\), sequence generation, relevant-pair pruning, MILP assembly, solver, and the refine step when infeasible.
\end{itemize}

\begin{figure}[b]
\centering
\begin{tikzpicture}[scale=0.45, font=\footnotesize]
\tikzset{
  region/.style={draw, thick, fill=gray!12},
  gridline/.style={draw, gray!50, thin},
  labelnode/.style={font=\footnotesize},
  grnode/.style={circle, draw, thick, fill=white, inner sep=1.5pt, minimum size=6pt},
  gredge/.style={line width=0.8pt}
}

\draw[thick] (0,0) rectangle (8,8);
\node[anchor=south west] at (-1.8,8.2) {Workspace partition into convex polytopes};

\draw[gridline] (2.66,0) -- (2.66,8);
\draw[gridline] (5.33,0) -- (5.33,8);
\draw[gridline] (0,4) -- (8,4);

\filldraw[region] (0,4) rectangle (2.66,8);
\filldraw[region] (2.66,4) rectangle (5.33,8);
\filldraw[region] (5.33,4) rectangle (8,8);
\filldraw[region] (0,0) rectangle (2.66,4);
\filldraw[region] (2.66,0) rectangle (5.33,4);
\filldraw[region] (5.33,0) rectangle (8,4);

\node[labelnode] at (1.33,6) {$R_1$};
\node[labelnode] at (4.0,6)   {$R_2$};
\node[labelnode] at (6.65,6)  {$R_3$};
\node[labelnode] at (1.33,2) {$R_4$};
\node[labelnode] at (4.0,2)   {$R_5$};
\node[labelnode] at (6.65,2)  {$R_6$};

\begin{scope}[xshift=10.5cm, yshift=0cm]
  \node[anchor=south west] at (-0.75,7) {$G_R$ (region adjacency)};
  \node[grnode] (r1) at (0.0,6) {$R_1$};
  \node[grnode] (r2) at (2.5,6) {$R_2$};
  \node[grnode] (r3) at (5,6) {$R_3$};
  \node[grnode] (r4) at (0.0,2) {$R_4$};
  \node[grnode] (r5) at (2.5,2) {$R_5$};
  \node[grnode] (r6) at (5,2) {$R_6$};

  \draw[gredge] (r1) -- (r2);
  \draw[gredge] (r2) -- (r3);
  \draw[gredge] (r1) -- (r4);
  \draw[gredge] (r2) -- (r5);
  \draw[gredge] (r3) -- (r6);
  \draw[gredge] (r4) -- (r5);
  \draw[gredge] (r5) -- (r6);
\end{scope}
\end{tikzpicture}
\caption{Workspace partition into convex polytopes \(R_j\) and the corresponding region adjacency graph \(G_R\). Edges indicate \(R_j\cap R_k\neq\emptyset\) (here, shared boundaries in the grid).}
\label{fig:workspace_adj}
\end{figure}
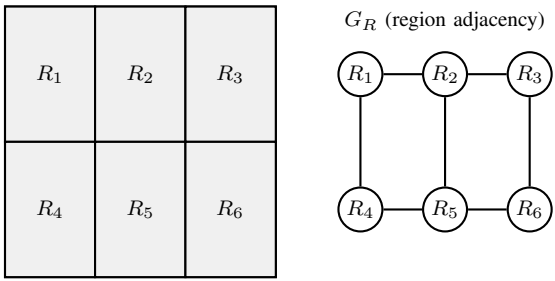

\begin{figure}
\centering
\begin{tikzpicture}[
  font=\scriptsize,
  node distance=4mm,
  >=Stealth,
  box/.style={draw, rounded corners=2pt, fill=gray!10, minimum width=42mm, minimum height=6mm, align=center},
  dbox/.style={diamond, draw, aspect=2, inner sep=1.2pt, fill=gray!10, align=center},
  line/.style={-Latex, line width=0.55pt}
]

\node[box] (in)    {Map, starts/goals,\; $T,L,\epsilon,\alpha,v_{\max}$};
\node[box, below=of in]   (part)  {Convex partition + $G_R$};
\node[box, below=of part] (seq)   {Sequences $\pi_i$ (shortest paths)};
\node[box, below=of seq]  (pairs) {Relevant pairs $\mathcal P^{(t)}$};
\node[box, below=of pairs](asm)   {Assemble MILP $\;(x,\delta,\gamma)$};
\node[box, below=of asm]  (sol)   {MILP solver};
\node[dbox, below=of sol,xshift=0mm] (ok) {Feasible?};
\node[box, below=of sol, yshift=-12mm]    (traj) {Trajectories (yes)};

\node[box, minimum width=12mm, right=5mm of ok] (ref) {Refine (ban edge)};
\draw[line] (ref.north) .. controls +(0,13mm) and +(10mm,-10mm) .. (seq.east);

\draw[line] (in)   -- (part)
            (part) -- (seq)
            (seq)  -- (pairs)
            (pairs)-- (asm)
            (asm)  -- (sol)
            (sol)  -- (ok)
            (ok)   -- node[right, xshift=0.4mm]{\scriptsize yes} (traj);

\draw[line] (ok.east) -- node[above]{\scriptsize no} (ref.west);
\end{tikzpicture}
\caption{Sequence then solve pipeline. On infeasibility, refine by blacklisting the offending time-region transition and replan.}
\label{fig:block}
\end{figure}
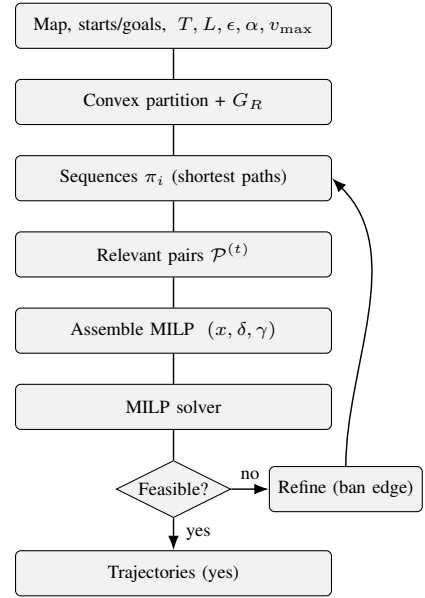

\begin{figure}[t]
\centering
\begin{tikzpicture}[font=\footnotesize]

\tikzset{
  region/.style={draw, very thick, fill=gray!12},
  okseg/.style={very thick},
  badseg/.style={very thick, dash pattern=on 3pt off 2pt},
  pt/.style={circle, fill=black, inner sep=1.4pt},
  note/.style={font=\scriptsize}
}

\coordinate (p1) at (0.3,0.4);
\coordinate (p2) at (4.6,0.4);
\coordinate (p3) at (4.9,2.8);
\coordinate (p4) at (3.0,4.7);
\coordinate (p5) at (0.4,3.6);

\path[region] (p1) -- (p2) -- (p3) -- (p4) -- (p5) -- cycle;
\node at (2.6,3.9) {$R_j$};
\node[note, anchor=south west] at (3.6,4.1) {$R_j=\{x \mid A_j x \le b_j\}$};

\coordinate (xk)  at (1.1,1.1);
\coordinate (xk1) at (3.2,2.0);

\draw[okseg] (xk) -- (xk1);
\node[pt, label=left:{$x_i^{(k)}$}] at (xk) {};
\node[pt, label=above:{$x_i^{(k+1)}$}] at (xk1) {};

\node[note, align=center] at (2.9,-0.0) {Endpoints in $R_j$ $\Rightarrow$ all convex combinations stay in $R_j$};

\coordinate (xk2bad) at (5.6,3.3);
\draw[badseg] (xk1) -- (xk2bad);
\node[pt] at (xk2bad) {};
\node[note, anchor=west] at (5.2,2.9) {$x_i^{(k+2)} \notin R_j$};
\node[note, anchor=south east] at (4.35,2.85) {violates region membership};
\draw[->, thin] (4.35,3) -- (4.65,2.9);
\draw[black] (intersection of xk1--xk2bad and p3--p4) circle (1.5pt);
\node[note, anchor=west] at (4.2,3.6) {dashed path leaves $R_j$ (invalid)};
\end{tikzpicture}
\caption{Region-compliant segment: enforcing $x_i^{(k)},x_i^{(k+1)} \in R_j$ keeps the segment inside the convex polytope $R_j$. A continuation that steps outside violates region membership.}
\label{fig:region_segment}
\end{figure}

\begin{figure}[t]
\centering
\begin{tikzpicture}[x=0.9cm,y=0.9cm, font=\footnotesize, >=Stealth]

\tikzset{
  gridline/.style={draw, gray!60, thin},
  head/.style={font=\scriptsize},
  relpair/.style={line width=1pt,color=blue,dashed},
  note/.style={font=\scriptsize}
}

\pgfmathtruncatemacro{\T}{8}
\pgfmathtruncatemacro{\Tminusone}{\T-1}
\def\N{4}

\foreach \t in {1,...,\T} {
  \node[head] at (\t-0.5,4.25) {$t=\t$};
}
\node[head, anchor=east] at (-0.2,3.5) {Agent 1};
\node[head, anchor=east] at (-0.2,2.5) {Agent 2};
\node[head, anchor=east] at (-0.2,1.5) {Agent 3};
\node[head, anchor=east] at (-0.2,0.5) {Agent 4};

\fill[gray!15] (6,0) rectangle (7,4);

\draw[gridline] (0,0) rectangle (\T,4);
\foreach \x in {1,...,\Tminusone} { \draw[gridline] (\x,0) -- (\x,4); }
\foreach \y in {1,...,3}   { \draw[gridline] (0,\y) -- (\T,\y); }

\foreach[count=\c from 1] \r in {1,1,2,2,2,3,3,3} {
  \node at (\c-0.5,3.5) {$R_{\r}$};
}
\foreach[count=\c from 1] \r in {3,2,2,2,1,1,1,1} {
  \node at (\c-0.5,2.5) {$R_{\r}$};
}
\foreach[count=\c from 1] \r in {2,2,1,1,2,2,2,3} {
  \node at (\c-0.5,1.5) {$R_{\r}$};
}
\foreach[count=\c from 1] \r in {1,1,1,2,2,2,3,3} {
  \node at (\c-0.5,0.5) {$R_{\r}$};
}

\def\xc{6.5}
\draw[relpair] (\xc,3.5) .. controls (7.2,3.5) and (7.2,1.5) .. (\xc,1.5);
\draw[relpair] (\xc,3.5) .. controls (7.5,3.5) and (7.5,0.5) .. (\xc,0.5);
\draw[relpair] (\xc,2.5) .. controls (6.9,2.5) and (6.9,1.5) .. (\xc,1.5);
\draw[relpair] (\xc,1.5) .. controls (6.9,1.5) and (6.9,0.5) .. (\xc,0.5);

\node[note, anchor=west,blue] at (-1,-1.35) {$\mathcal{P}^{(7)}=\{(1,3),(1,4),(2,3),(3,4)\}$};

\node[head, anchor=east] at (-0.2,-0.7) {$|\mathcal{P}^{(t)}|$};
\foreach[count=\c from 1] \val in {4,6,6,6,6,5,4,3} {
  \node at (\c-0.5,-0.7) {\val};
}
\node[note, anchor=west] at (\T-2.75,-1.35) {Total per $t$: $\binom{4}{2}=6$};

\end{tikzpicture}
\caption{Relevant-pair pruning across time. Only agent pairs sharing a region or an adjacent region at time $t$ enter $\mathcal{P}^{(t)}$. The highlighted column illustrates active pairs at $t=7$. The bottom row shows $|\mathcal{P}^{(t)}|$, which can be strictly less than the total $\binom{N}{2}$.}
\label{fig:relevant_pairs_time}
\end{figure}

\begin{figure}[t]
\centering
\begin{tikzpicture}[scale=0.7, font=\footnotesize, >=Stealth]

\tikzset{
  pt/.style={circle, fill=black, inner sep=1.3pt},
  hyper/.style={very thick},
  hyperInactive/.style={very thick, dashed},
  safe/.style={fill=gray!15, draw=none},
  note/.style={font=\scriptsize}
}

\coordinate (xi) at (0,0);
\coordinate (xj) at (3,1.2);

\def\cx{0.8}
\def\cy{0.6}
\def\px{-0.6}
\def\py{0.8}
\def\dd{2.0}
\def\S{3.0}
\def\L{6.0}

\coordinate (H)  at ({\dd*\cx},{\dd*\cy});
\coordinate (H1) at ({\dd*\cx + \S*\px},{\dd*\cy + \S*\py});
\coordinate (H2) at ({\dd*\cx - \S*\px},{\dd*\cy - \S*\py});
\coordinate (H3) at ({\dd*\cx - \S*\px + \L*\cx},{\dd*\cy - \S*\py + \L*\cy});
\coordinate (H4) at ({\dd*\cx + \S*\px + \L*\cx},{\dd*\cy + \S*\py + \L*\cy});

\fill[safe] (H1) -- (H2) -- (H3) -- (H4) -- cycle;
\draw[hyper] (H1) -- (H2);

\draw[->] (xi) -- ++({1.1*\cx},{1.1*\cy}) node[above left] {$c^{(l)}$};
\draw[->] (xi) -- (H) node[midway, below right] {$d^{(l)}$};

\node[pt, label=below:{$x_i^{(k)}$}] at (xi) {};
\node[pt, label=below:{$x_j^{(k)}$}] at (xj) {};

\def\cxb{-0.6}
\def\cyb{0.8}
\def\pxb{-0.8}
\def\pyb{-0.6}
\coordinate (Hb)  at ({\dd*\cxb},{\dd*\cyb});
\coordinate (Hb1) at ({\dd*\cxb + \S*\pxb},{\dd*\cyb + \S*\pyb});
\coordinate (Hb2) at ({\dd*\cxb - \S*\pxb},{\dd*\cyb - \S*\pyb});
\draw[->, gray!70] (xi) -- ++({0.9*\cxb},{0.9*\cyb}) node[above] {$c^{(m)}$};
\node[note, anchor=south east] at (1.2,3.4) {inactive};

\draw[->, thin] ($(H)!0.65!(H3)$) -- ++({0.7*\cx},{0.7*\cy});
\draw[->, thin] ($(H)!0.35!(H4)$) -- ++({0.7*\cx},{0.7*\cy});
\node[note] at (4.4,6.5) {safe half-space for $x_j$};
\node[note, anchor=west] at ($(H2)!0.5!(H3)+(-4.2,4.2)$)
{$\{x:\langle c^{(l)},x{-}x_i^{(k)}\rangle \ge d^{(l)}\}$};
\draw[hyperInactive, color=gray!90] (Hb1) -- (Hb2)
  node[midway, above left, note, xshift=10pt,yshift=10pt]{inactive $c^{(m)}$ (ignored)};
\end{tikzpicture}
\caption{Separating hyperplane for inter-agent avoidance. For an active candidate direction $c^{(l)}$, the safe half-space satisfies $\langle c^{(l)},\, x_j^{(k)}-x_i^{(k)}\rangle \ge d^{(l)}$. A second candidate direction $c^{(m)}$ is shown inactive.}
\label{fig:sep_hyperplane}
\end{figure}
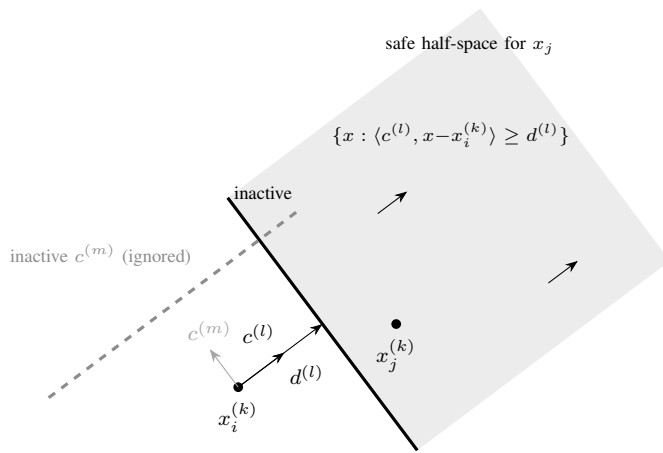

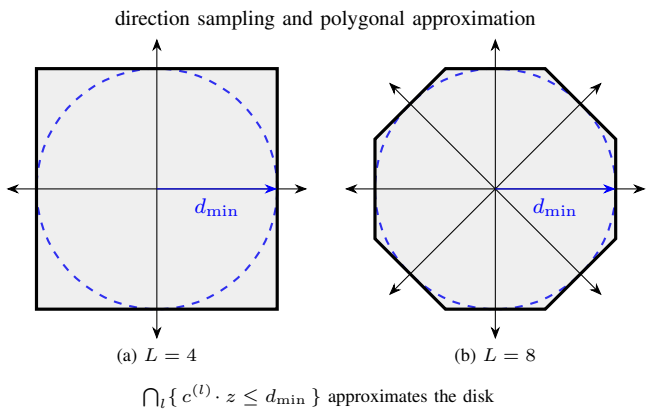
\begin{figure}[t]
\centering
\begin{tikzpicture}[font=\footnotesize, >=Stealth]

\tikzset{
  disk/.style={draw, thick, dashed},
  poly/.style={draw, very thick, fill=gray!50, fill opacity=0.25},
  ray/.style={->, thin},
  note/.style={font=\scriptsize}
}

\begin{scope}
  \pgfmathtruncatemacro{\Lsamp}{4}
  \pgfmathsetmacro{\dmin}{1.6}
  \pgfmathsetmacro{\R}{\dmin / cos(180/\Lsamp)}

  \draw[disk,blue] (0,0) circle[radius=\dmin];

  \path[poly]
    ({\R*cos((2*0+1)*180/\Lsamp)},{\R*sin((2*0+1)*180/\Lsamp)})
    \foreach \k in {1,...,\numexpr\Lsamp-1\relax} {
      -- ({\R*cos((2*\k+1)*180/\Lsamp)},{\R*sin((2*\k+1)*180/\Lsamp)})
    } -- cycle;

  \foreach \k in {0,...,\numexpr\Lsamp-1\relax} {
    \draw[ray] (0,0) -- ({1.25*\dmin*cos(360*\k/\Lsamp)},{1.25*\dmin*sin(360*\k/\Lsamp)});
  }

  \draw[->,blue] (0,0) -- (\dmin,0) node[midway, below] {$d_{\min}$};
  \node[note, anchor=north] at (0,-2.0) {(a) $L=4$};
\end{scope}

\begin{scope}[xshift=4.5cm]
  \pgfmathtruncatemacro{\Lsamp}{8}
  \pgfmathsetmacro{\dmin}{1.6}
  \pgfmathsetmacro{\R}{\dmin / cos(180/\Lsamp)}

  \draw[disk, blue] (0,0) circle[radius=\dmin];

  \path[poly]
    ({\R*cos((2*0+1)*180/\Lsamp)},{\R*sin((2*0+1)*180/\Lsamp)})
    \foreach \k in {1,...,\numexpr\Lsamp-1\relax} {
      -- ({\R*cos((2*\k+1)*180/\Lsamp)},{\R*sin((2*\k+1)*180/\Lsamp)})
    } -- cycle;

  \foreach \k in {0,...,\numexpr\Lsamp-1\relax} {
    \draw[ray] (0,0) -- ({1.25*\dmin*cos(360*\k/\Lsamp)},{1.25*\dmin*sin(360*\k/\Lsamp)});
  }

  \draw[->, blue] (0,0) -- (\dmin,0) node[midway, below] {$d_{\min}$};
  \node[note, anchor=north] at (0,-2.0) {(b) $L=8$};
\end{scope}

\node[align=left, anchor=west] at (-0.68,2.25) {direction sampling and polygonal approximation};
\node[note, align=center, anchor=west] at (-0.32,-2.75)
{$\bigcap_l\{\,c^{(l)}\!\cdot z \le d_{\min}\,\}$ approximates the disk};

\end{tikzpicture}
\caption{Direction sampling for separation. The dashed circle is radius \(d_{\min}\); the filled polygon is \(\bigcap_l\{\,c^{(l)}\!\cdot z \le d_{\min}\,\}\), an outer approximation of the disk for \(z=x_j^{(k)}-x_i^{(k)}\). Larger \(L\) tightens the approximation.}
\label{fig:dir_sampling}
\end{figure}

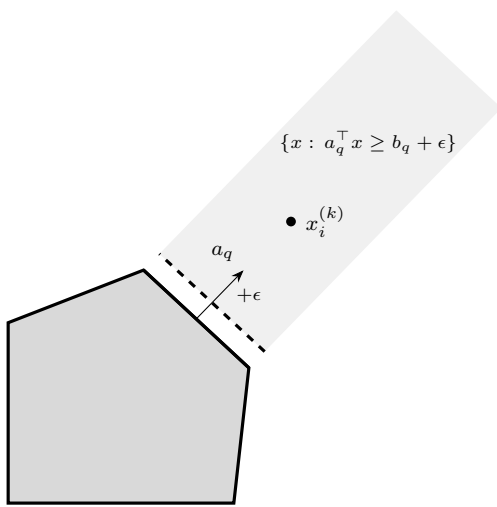
\begin{figure}[t]
\centering
\begin{tikzpicture}[font=\footnotesize, >=Stealth]

\tikzset{
  obstacle/.style={draw, very thick, fill=gray!30},
  epsline/.style={very thick, dashed},
  activefacet/.style={very thick},
  normal/.style={->, thin},
  safe/.style={fill=gray!12, draw=none},
  pt/.style={circle, fill=black, inner sep=1.3pt},
  note/.style={font=\scriptsize}
}

\coordinate (A) at (0,0);
\coordinate (B) at (3.0,0);
\coordinate (C) at (3.2,1.8);
\coordinate (D) at (1.8,3.1);
\coordinate (E) at (0,2.4);

\path[name path=obs] (A) -- (B) -- (C) -- (D) -- (E) -- cycle;

\def\nx{0.70}
\def\ny{0.72}
\def\eps{0.30}
\def\L{4.5}

\coordinate (Ceps) at ($ (C) + \eps*(\nx,\ny) $);
\coordinate (Deps) at ($ (D) + \eps*(\nx,\ny) $);
\coordinate (Cfar) at ($ (Ceps) + \L*(\nx,\ny) $);
\coordinate (Dfar) at ($ (Deps) + \L*(\nx,\ny) $);

\path[safe] (Ceps) -- (Deps) -- (Dfar) -- (Cfar) -- cycle;

\filldraw[obstacle] (A) -- (B) -- (C) -- (D) -- (E) -- cycle;
\draw[activefacet] (C) -- (D);
\draw[epsline] (Ceps) -- (Deps);
\node[note, anchor=west] at ($ (Deps)!0.5!(Ceps) + (0.2,0.1) $) {$+\epsilon$};

\draw[normal] ($ (C)!0.5!(D) $) -- ++(0.9*\nx,0.9*\ny) node[above left] {$a_q$};
\coordinate (xk) at ($ (Ceps)!0.5!(Deps) + (1.5*\nx, 1.5*\ny) $);
\node[pt, label=right:{$x_i^{(k)}$}] at (xk) {};

\node[note, anchor=west] at ($(Ceps)!0.5!(Dfar)+(-0.8,0.5)$)
  {$\{x:\,a_q^\top x \ge b_q+\epsilon\}$};

\end{tikzpicture}
\caption{Obstacle avoidance via facet selection. To stay outside a convex obstacle, at least one outward facet inequality is enforced with a margin $\epsilon$. The active facet’s $\epsilon$-offset half-space (shaded) contains the agent state; other facets are inactive.
\(a_q\) denotes the outward normal of facet \(q\).}
\label{fig:obstacle_facets}
\end{figure}

\begin{figure}[t]
\centering
\begin{tikzpicture}[font=\footnotesize, >=Stealth]

\begin{scope}[yshift=3.8cm]
  \tikzset{
    traj/.style={very thick},
    stepbrace/.style={decorate, decoration={brace, amplitude=3pt}},
    pt/.style={circle, fill=black, inner sep=1.2pt},
    note/.style={font=\scriptsize}
  }

  \draw[->] (-0.2,0) -- (5.6,0) node[right] {$k$};
  \coordinate (k0) at (0,0.6);
  \coordinate (k1) at (1,1.6);
  \coordinate (k2) at (2,1.2);
  \coordinate (k3) at (3,2.4);
  \coordinate (k4) at (4,2.1);
  \coordinate (k5) at (5,3.1);

  \draw[traj] (k0) -- (k1) -- (k2) -- (k3) -- (k4) -- (k5);
  \foreach \p/\lab in {k0/0,k1/1,k2/2,k3/3,k4/4,k5/5}{
    \node[pt] at (\p) {};
    \node[below] at ($(\p)+(0,-0.10)$) {$\lab$};
  }

  \def\xannot{4.9}

  \draw[dashed,blue] (k2) -- (\xannot,1.2);
  \draw[dashed,blue] (k1) -- (\xannot,1.6);
  \draw[stepbrace] (\xannot,1.2) -- node[right=2pt] {$\big|x^{(2)}{-}x^{(1)}\big|$} (\xannot,1.6);

  \draw[dashed,blue] (k4) -- (\xannot,2.1);
  \draw[dashed,blue] (k3) -- (\xannot,2.4);
  \draw[stepbrace] (\xannot,2.1) -- node[right=2pt] {$\big|x^{(4)}{-}x^{(3)}\big|$} (\xannot,2.4);

  \node[note, fill=white, inner sep=2pt] (acc) at (\xannot,1.85)
    {$\big|x^{(3)}{-}2x^{(2)}{+}x^{(1)}\big|$};
  \draw[dashed,blue] (k1) -- (acc.west);
  \draw[dashed,blue] (k2) -- (acc.west);
  \draw[dashed,blue] (k3) -- (acc.west);

  \node[note, anchor=west] at (-0.4,3.25) {(a) Discrete cost components};
\end{scope}

\begin{scope}[yshift=0.0cm]
  \tikzset{
    shortpath/.style={thick, dash pattern=on 4pt off 4pt, blue},
    smoothpath/.style={very thick},
    start/.style={star,star points=5,star point ratio=2.4,fill=black,minimum size=5pt,inner sep=0pt},
    goal/.style={star,star points=5,star point ratio=2.4,fill=white,draw=black,minimum size=6pt,inner sep=0pt},
    note/.style={font=\scriptsize}
  }

  \node[start, label=left:{$x_0$}] (S) at (0.2,0.2) {};
  \node[goal,  label=right:{$x_f$}] (G) at (6.0,3.2) {};

  \draw[shortpath]
    (S) -- (2.0,0.2) -- (2.0,1.8) -- (4.2,1.8) -- (4.2,3.0) -- (G);
  \node[note, anchor=west] at (0.2,2.75) {$\alpha=0$ (length only)};

  \draw[smoothpath]
    (S) .. controls (1.2,0.7) and (2.5,0.9) .. (2.8,1.6)
        .. controls (3.1,2.2) and (4.8,2.6) .. (G);
  \node[note, anchor=east] at (6.0,0.2) {$\alpha>0$ (length + smoothing)};

  \draw[->, thin] (3.1,1.8) -- +(0,0.75) node[above, note] {jerkier};
  \draw[->, thin] (3.8,2.2) -- +(0,-0.75) node[below, note] {smoother};

  \node[note, anchor=west] at (-0.5,3.25) {(b) Effect of $\alpha$ on 2D paths};
\end{scope}

\end{tikzpicture}
\caption{Cost components and smoothing. (a) The $L_1$ step term penalizes per-step displacement; the acceleration term penalizes discrete curvature via second differences. (b) Increasing $\alpha$ yields smoother trajectories that may be slightly longer but have reduced acceleration.}
\label{fig:cost_smoothing}
\end{figure}
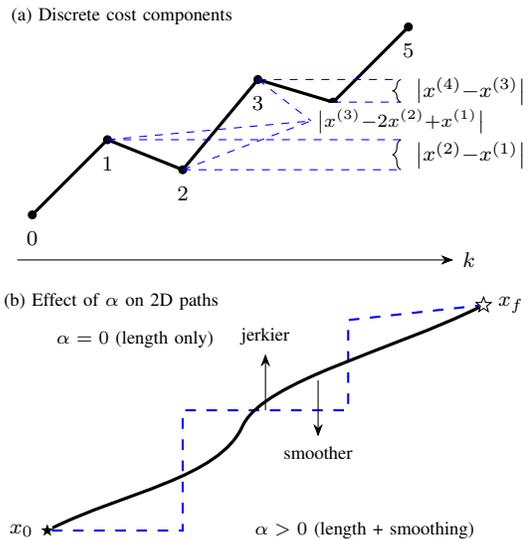

\begin{figure}[t]
\centering
\begin{tikzpicture}[font=\footnotesize, >=Stealth]
\tikzset{
  regionnode/.style={circle, draw, thick, fill=white, inner sep=1.5pt, minimum size=7.5mm},
  edge/.style={line width=0.8pt},
  banned/.style={line width=1.0pt, dash pattern=on 3pt off 2pt},
  seqbox/.style={draw, rounded corners=2pt, fill=gray!12, inner sep=2.5pt},
  seqarrow/.style={->, thick},
  conflictmark/.style={draw, cross out, line width=0.9pt, minimum size=7pt, inner sep=0pt},
  note/.style={font=\scriptsize}
}

\node[seqbox] (seqA) at (0,6.5) {
  \begin{tikzpicture}[x=1cm,y=1cm, baseline=(current bounding box.center)]
    \foreach \t in {1,...,7} {\node at (\t,0.3) {$t=\t$};}
    \foreach \t/\r in {1/$R_1$,2/$R_2$,3/$R_3$,4/$R_3$,5/$R_3$,6/$R_2$,7/$R_2$} {
      \node[draw, rounded corners=1.5pt, fill=white, minimum width=7mm, minimum height=4mm] (b\t) at (\t,1) {\r};
    }
    \foreach \t in {1,...,6} {
      \pgfmathtruncatemacro{\n}{\t+1}
      \draw[seqarrow] (b\t.east) -- (b\n.west);
    }
    \node[conflictmark] at ($(b2.east)!0.5!(b3.west)$) {};
    \node[note, anchor=south] at ($(b2)!0.5!(b3)+(0,0.25)$) {conflict at $t^\star$};
  \end{tikzpicture}
};
\node[note, align=center] at (-0.8,7.55) {(i) Per-agent sequence $\pi_i$ \& MILP feasibility check};

\begin{scope}[yshift=3.9cm]
  \node[regionnode] (r1) at (-1.9,0.7) {$R_1$};
  \node[regionnode] (r2) at ( 0.0,0.7) {$R_2$};
  \node[regionnode] (r3) at ( 1.9,0.7) {$R_3$};
  \node[regionnode] (r5) at ( 0.0,-0.7) {$R_5$};

  \draw[edge] (r1) -- (r2);
  \draw[edge] (r2) -- (r5);
  \draw[edge] (r5) -- (r3);

  \draw[banned] (r2) -- (r3);
  \node[conflictmark] at ($(r2)!0.5!(r3)$) {};

  \node[note, align=center] at (-2.5,1.5) {(ii) Blacklist at $t^\star$};
  \node[note, align=center] at (1.7,-0.2) {$\text{ban }(R_2\!\to\!R_3)$};
\end{scope}

\node[seqbox] (seqB) at (0,1.4) {
  \begin{tikzpicture}[x=1cm,y=1cm, baseline=(current bounding box.center)]
    \foreach \t in {1,...,7} {\node at (\t,0.3) {$t=\t$};}
    \foreach \t/\r in {1/$R_1$,2/$R_2$,3/$R_5$,4/$R_3$,5/$R_3$,6/$R_2$,7/$R_2$} {
      \node[draw, rounded corners=1.5pt, fill=white, minimum width=7mm, minimum height=4mm] (c\t) at (\t,1) {\r};
    }
    \foreach \t in {1,...,6} {\draw[seqarrow] (c\t.east) -- (c\the\numexpr\t+1\relax.west);}
    \node[note, anchor=south] at ($(c2)!0.5!(c3)+(0,0.25)$) {detour};
  \end{tikzpicture}
};
\node[note, align=center] at (-2.3,2.5) {(iii) New $\pi_i$ \& resolve};

\draw[->, thick] (seqA.south) -- node[right] {infeasible} (0,5);
\draw[->, thick] (0,2.8) -- node[right] {replan} (seqB.north);

\end{tikzpicture}
\caption{Refinement on infeasibility. If the MILP is infeasible at \(t^\star\) for a candidate sequence, blacklist the offending \emph{time-indexed} transition in \(G_R\) (crossed edge), replan sequences to detour (e.g., via \(R_5\)), and resolve.}
\label{fig:refine_on_conflict}
\end{figure}
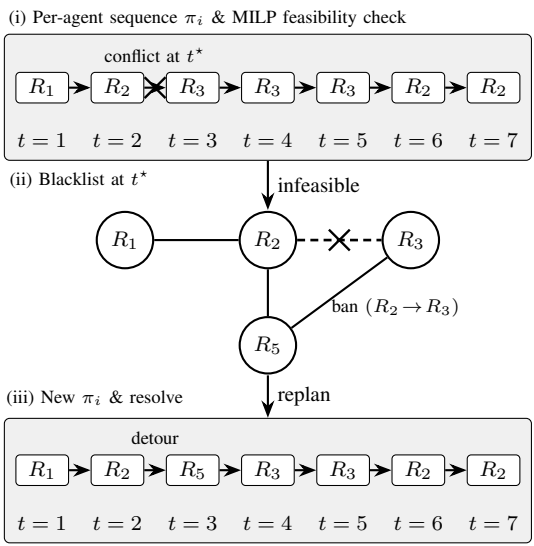

\begin{figure*}[t]
\centering
\begin{tikzpicture}[font=\footnotesize, >=Stealth]

\tikzset{
  nodeExp/.style={circle, draw, thick, fill=white, minimum size=5.5mm, inner sep=0pt},
  nodePrune/.style={circle, draw, thick, dashed, fill=gray!10, minimum size=5.5mm, inner sep=0pt},
  nodeInc/.style={circle, draw, very thick, fill=gray!30, minimum size=5.5mm, inner sep=0pt},
  cross/.style={draw, cross out, line width=0.9pt, minimum size=5.5mm, inner sep=0pt},
  edge/.style={line width=0.8pt},
  note/.style={font=\scriptsize}
}

\begin{scope}[xshift=0cm, yshift=0cm]
\node[note, anchor=west] at (-4,1.2) {(a) Naive MILP};

\coordinate (a0) at (0,1.0);
\coordinate (a1) at (-2.2,-0.2);
\coordinate (a2) at ( 2.2,-0.2);
\coordinate (a3) at (-3.2,-1.6);
\coordinate (a4) at (-1.2,-1.6);
\coordinate (a5) at ( 1.2,-1.6);
\coordinate (a6) at ( 3.2,-1.6);
\coordinate (a7) at (-3.8,-3.0);
\coordinate (a8) at (-2.6,-3.0);
\coordinate (a9) at (-1.8,-3.0);
\coordinate (a10) at (-0.6,-3.0);
\coordinate (a11) at (0.6,-3.0);
\coordinate (a12) at (1.8,-3.0);
\coordinate (a13) at (2.6,-3.0);
\coordinate (a14) at (3.8,-3.0);

\foreach \u/\v in {a0/a1,a0/a2,a1/a3,a1/a4,a2/a5,a2/a6,
                   a3/a7,a3/a8,a4/a9,a4/a10,a5/a11,a5/a12,a6/a13,a6/a14}{
  \draw[edge] (\u) -- (\v);
}

\node[nodeExp] at (a0) {};
\node[nodeExp] at (a1) {};
\node[nodeExp] at (a2) {};
\node[nodeExp] at (a3) {};
\node[nodePrune,label=left:{\scriptsize pruned}] at (a4) {};
\node[nodeExp] at (a5) {};
\node[nodeExp] at (a6) {};

\node[cross] at (a7) {};
\node[nodePrune] at (a8) {};
\node[nodeExp] at (a9) {};
\node[cross] at (a10) {};
\node[nodePrune] at (a11) {};
\node[nodeInc,label=above:{\scriptsize incumbent}] at (a12) {};
\node[cross] at (a13) {};
\node[nodePrune] at (a14) {};

\node[note, align=left, anchor=west] at (1.3,1.0cm)
{$b_{\text{naive}} \sim O(N^2 T L)$\\many collision binaries\\weak LP relaxation};
\end{scope}

\begin{scope}[xshift=9cm, yshift=-1cm]
\node[note, anchor=west] at (-3.6,2.2) {(b) Proposed formulation};

\coordinate (b0) at (0,1.0);
\coordinate (b1) at (-1.6,-0.3);
\coordinate (b2) at ( 1.6,-0.3);
\coordinate (b3) at (-2.2,-1.7);
\coordinate (b4) at (-1.0,-1.7);
\coordinate (b5) at ( 1.0,-1.7);
\coordinate (b6) at ( 2.2,-1.7);

\foreach \u/\v in {b0/b1,b0/b2,b1/b3,b1/b4,b2/b5,b2/b6}{
  \draw[edge] (\u) -- (\v);
}

\node[nodeExp,label=above:{\scriptsize root}] at (b0) {};
\node[nodePrune,label=left:{\scriptsize fathomed}] at (b1) {};
\node[nodeExp] at (b2) {};
\node[cross] at (b3) {};
\node[nodePrune] at (b4) {};
\node[nodeInc,label=above left:{\scriptsize incumbent}] at (b5) {};
\node[nodePrune] at (b6) {};

\node[note, anchor=west, align=left] at (0.8,1.6cm)
{$b_{\text{PAAMP}} \sim O(\rho N^2 T L)$ \\
with $\rho<1$\\
region membership fixed\\
relevant pairs only\\
tighter LP bounds};
\end{scope}
\end{tikzpicture}
\caption{Branch and bound comparison. Left: naive MILP with many active binaries and a large search tree. Right: PAAMP formulation with region membership and relevant pair pruning reduces binaries and tightens the LP relaxation, so many branches are fathomed early and the tree is smaller.}
\label{fig:bb_compare}
\end{figure*}
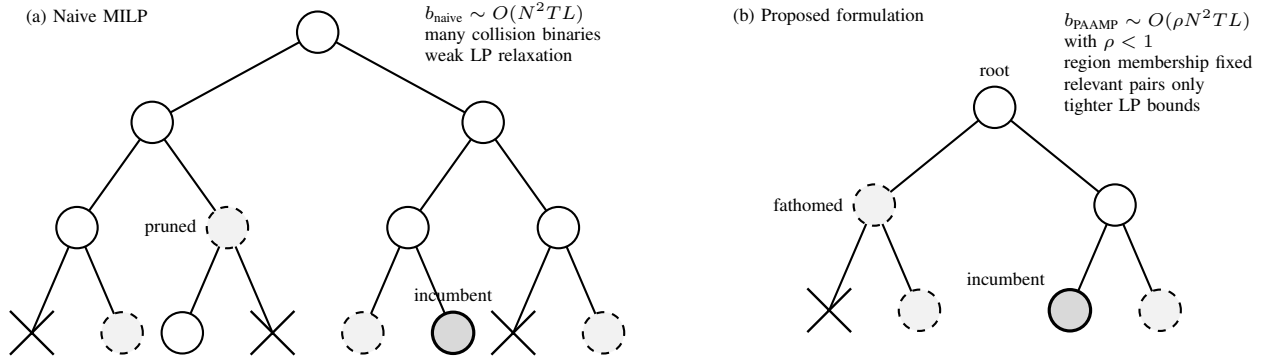

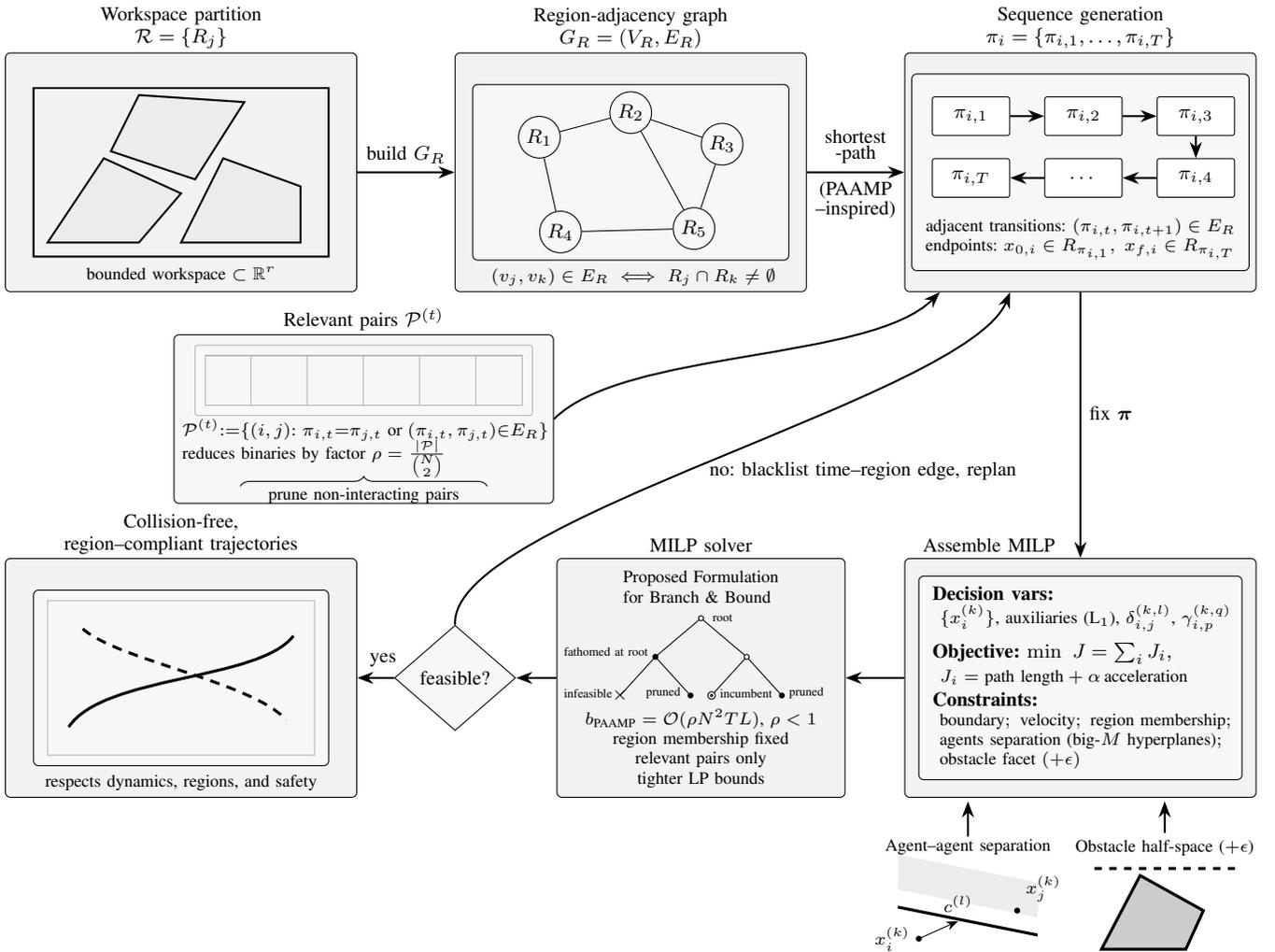
\begin{figure*}[b]
\centering
\begin{tikzpicture}[font=\footnotesize, >=Stealth]

\tikzset{
  block/.style   ={draw, rounded corners=2pt, fill=gray!10, minimum width=5.0cm, minimum height=3.4cm, align=center},
  small/.style   ={draw, rounded corners=2pt, fill=gray!06, inner sep=2pt},
  decision/.style={draw, diamond, aspect=2, inner sep=1.6pt, fill=gray!06},
  region/.style  ={draw, thick, fill=gray!15},
  obstacle/.style={draw, very thick, fill=gray!40},
  edgeA/.style   ={->, thick},
  note/.style    ={font=\scriptsize, align=left},
  brace/.style   ={decorate, decoration={brace, amplitude=4pt}}
}

\def\xgap{6.4cm}
\def\ygap{3.6cm}

\node[block, label={[align=center,label distance=-2pt]north:Workspace partition\\ $\mathcal R=\{R_j\}$}] (ws)  at (-\xgap,  \ygap) {};
\node[block, label={[align=center,label distance=-2pt]north:Region-adjacency graph\\ $G_R=(V_R,E_R)$}] (gr)  at ( 0cm,    \ygap) {};
\node[block, label={[align=center,label distance=-2pt]north:Sequence generation\\ $\pi_i=\{\pi_{i,1},\dots,\pi_{i,T}\}$}] (seq) at ( \xgap,  \ygap) {};

\node[block, label={[align=center,label distance=-1pt,xshift=-37pt]north:Assemble MILP}] (milp)   at (\xgap, -\ygap) {};
\node[block, minimum width=4.1cm, label={[align=center,label distance=-1pt]north:{MILP solver}}] (solveblk) at (1cm, -\ygap) {}; 
\node[decision, minimum size=1.5cm] (dec) at ($(solveblk.center)+(-3.5cm,0)$) {feasible?};
\node[block, label={[align=center,label distance=-2pt]north:Collision‐free,\\ region--compliant trajectories}] (out) at (-\xgap, -\ygap) {};

\draw[edgeA] (ws) -- node[above] {build $G_R$} (gr);
\draw[edgeA] (gr) -- node[above,yshift=8pt]{shortest} node[above]{‐path} node[below]{(PAAMP} node[below,yshift=-8pt]{--inspired)} (seq);

\draw[edgeA] (seq.south) -- node[right, yshift=5pt] {fix $\boldsymbol\pi$} (milp.north);

\draw[edgeA] (milp.west) -- node[above] {} (solveblk.east);
\draw[edgeA] (solveblk) -- (dec);
\draw[edgeA] (dec.west) -- node[above,xshift=3pt,yshift=1pt] {yes} (out.east);

\draw[edgeA] (dec.north) .. controls +(-0.1,1.5) and +(-0.6,-1.5) .. node[pos=0.65, below, yshift=-18.5pt, xshift=12pt] {no: blacklist time--region edge, replan} ($(seq.south)+(-1cm,0)$);

\begin{scope}
  \path let \p1=(ws.center) in
    coordinate (W0) at ($(ws.center)+(-2.1cm,-1.2cm)$)
    coordinate (W1) at ($(ws.center)+(+2.1cm,+1.2cm)$);
  \draw[thick] (W0) rectangle (W1);
  \path[region] ($(W0)+(0.2,0.2)$) --
                ($(W0)+(1.5,0.2)$) --
                ($(W0)+(2.1,0.9)$) --
                ($(W0)+(1.0,1.4)$) -- cycle;
  \path[region] ($(W0)+(2.1,0.2)$) --
                ($(W0)+(3.8,0.2)$) --
                ($(W0)+(3.8,1.0)$) --
                ($(W0)+(2.7,1.4)$) -- cycle;
  \path[region] ($(W0)+(1.1,1.5)$) --
                ($(W0)+(2.3,1.1)$) --
                ($(W0)+(3.0,2.2)$) --
                ($(W0)+(1.5,2.3)$) -- cycle;
  \node[note] at ($(ws.south)+(0,0.25)$) {bounded workspace $\subset\mathbb R^r$};
\end{scope}

\begin{scope}
  \node[small, anchor=center, minimum width=4.5cm, minimum height=2.5cm] (GRbox) at (gr.center) {};
  \node[circle, draw, inner sep=1.6pt] (r1) at ($(GRbox.center)+(-1.3,0.5)$) {$R_1$};
  \node[circle, draw, inner sep=1.6pt] (r2) at ($(GRbox.center)+(0,0.85)$) {$R_2$};
  \node[circle, draw, inner sep=1.6pt] (r3) at ($(GRbox.center)+(1.3,0.4)$) {$R_3$};
  \node[circle, draw, inner sep=1.6pt] (r4) at ($(GRbox.center)+(-1.0,-0.85)$) {$R_4$};
  \node[circle, draw, inner sep=1.6pt] (r5) at ($(GRbox.center)+(0.9,-0.8)$) {$R_5$};
  \foreach \u/\v in {r1/r2,r2/r3,r1/r4,r2/r5,r4/r5,r3/r5} {\draw (\u)--(\v);}
  \node[note, anchor=north] at ($(GRbox.south)+(0,0.02)$) {$\;(v_j,v_k)\in E_R \iff R_j\cap R_k\neq\emptyset$};
\end{scope}

\begin{scope}
    \node[small, anchor=center, minimum width=4.8cm, minimum height=2.8cm] (SEQbox) at (seq.center) {};

    \node[draw, rounded corners=1pt, fill=white, minimum width=11mm, minimum height=5.5mm] (b1) at ($(SEQbox.west)+(0.85,0.80)$) {$\pi_{i,1}$};
    \node[draw, rounded corners=1pt, fill=white, minimum width=11mm, minimum height=5.5mm] (b2) at ($(SEQbox.west)+(2.45,0.80)$) {$\pi_{i,2}$};
    \node[draw, rounded corners=1pt, fill=white, minimum width=11mm, minimum height=5.5mm] (b3) at ($(SEQbox.west)+(4.05,0.80)$) {$\pi_{i,3}$};
    
    \node[draw, rounded corners=1pt, fill=white, minimum width=11mm, minimum height=5.5mm] (b4) at ($(SEQbox.west)+(0.85,-0.08)$) {$\pi_{i,T}$};
    \node[draw, rounded corners=1pt, fill=white, minimum width=11mm, minimum height=5.5mm] (b5) at ($(SEQbox.west)+(2.45,-0.08)$) {$\cdots$};
    \node[draw, rounded corners=1pt, fill=white, minimum width=11mm, minimum height=5.5mm] (b6) at ($(SEQbox.west)+(4.05,-0.08)$) {$\pi_{i,4}$};
    
    \draw[edgeA] (b1.east) -- (b2.west);
    \draw[edgeA] (b2.east) -- (b3.west);
    \draw[edgeA] (b3.south) -- (b6.north);
    \draw[edgeA] (b6.west) -- (b5.east);
    \draw[edgeA] (b5.west) -- (b4.east);

    \node[note, align=center] at ($(SEQbox.center)+(0,-0.95)$)
    {adjacent transitions: $(\pi_{i,t},\pi_{i,t+1})\in E_R$\\
    endpoints: $x_{0,i}\in R_{\pi_{i,1}},\;x_{f,i}\in R_{\pi_{i,T}}$};
\end{scope}

\begin{scope}
\node[small, anchor=north, minimum width=5.4cm, minimum height=2.4cm,
      label={[align=center,label distance=-2pt]north:Relevant pairs $\mathcal P^{(t)}$}]
      (RP) at ($(seq.south)+(-10.2,-0.6)$) {};

\node[draw, rounded corners=1pt, inner sep=0pt, line width=0.3pt, draw=gray!60,
      minimum width=4.8cm, minimum height=1.0cm, anchor=north] (RPgrid)
      at ($(RP.north)+(0,-0.15cm)$) {};
      
\draw[gray!60, thin] ($(RPgrid.south west)+(0.15,0.15)$)
  rectangle ($(RPgrid.north east)+(-0.15,-0.15)$);
\foreach \t in {1,...,5} {
  \draw[gray!60, thin] ($(RPgrid.south west)+(0.80*\t,0.15)$)
    -- ($(RPgrid.north west)+(0.80*\t,-0.15)$);
}
\foreach \a in {1,...,1} {
  \draw[gray!60, thin] ($(RPgrid.west)+(0.15,0.15+0.20*\a)$)
    -- ($(RPgrid.east)+(-0.15,0.15+0.20*\a)$);
}

\node[note, align=left, anchor=north west] at ($(RPgrid.south west)+(-0.3,0.08)$)
{$\mathcal P^{(t)}{:=}\{(i,j){:}\;\pi_{i,t}{=}\pi_{j,t} \;\text{or}\;(\pi_{i,t},\pi_{j,t}){\in}E_R\}$\\
reduces binaries by factor $\rho=\tfrac{|\mathcal P|}{\binom{N}{2}}$};

\draw[brace] ($(RP.south west)+(0.95,0.25)$) -- node[below=-3pt] {\scriptsize prune non‐interacting pairs} ($(RP.south east)+(-0.95,0.25)$);
\draw[edgeA] (RP.east) .. controls +(1,1.0) and +(-1,-1.0) .. ($(seq.south)+(-2cm,0)$);
\end{scope}

\begin{scope}
  \node[small, anchor=center, minimum width=4.6cm, minimum height=2.9cm] (Mbox) at (milp.center) {};
  \node[anchor=west] at ($(Mbox.west)+(0.08,1.2)$) {\textbf{Decision vars:}};
  \node[note, anchor=west] at ($(Mbox.west)+(0.18,0.85)$) {$\{x_i^{(k)}\}$, auxiliaries (L$_1$), $\delta_{i,j}^{(k,l)}$, $\gamma_{i,p}^{(k,q)}$};
  
  \node[anchor=west] at ($(Mbox.west)+(0.08,0.35)$) {\textbf{Objective:} $\min\ J=\sum_i J_i,$};
  \node[note, anchor=west] at ($(Mbox.west)+(0.18,0.0)$)
  {$J_i=\text{path length}+\alpha\,\text{acceleration}$};

  \node[anchor=west] at ($(Mbox.west)+(0.08,-0.3)$) {\textbf{Constraints:}};
  \node[note, align=left, anchor=west] at ($(Mbox.west)+(0.18,-0.9)$)
  {$\text{boundary};\ \text{velocity};\ \text{region membership};$\\
   $\text{agents separation (big-}M\text{ hyperplanes)};$ \\$ \text{obstacle facet }(+\epsilon)$};
\end{scope}

\begin{scope}
  \node[small, anchor=north, minimum width=2cm, minimum height=0.5cm, white]
        (COL) at ($(milp.south)+(-1.6,-1.65)$) {};
  \coordinate (H1) at ($(COL.center)+(-1.00,0.35)$);
  \coordinate (H2) at ($(COL.center)+( 1.00,-0.05)$);
  \fill[gray!15] ($(H1)+(0,0.25)$) -- ($(H2)+(0,0.25)$) -- ($(H2)+(0,0.75)$) -- ($(H1)+(0,0.75)$) -- cycle;
  \draw[very thick] (H1) -- (H2);
  \coordinate (xi) at ($(COL.center)+(-0.70,-0.10)$);
  \coordinate (xj) at ($(COL.center)+( 0.70, 0.30)$);
  \fill (xi) circle (1.2pt) node[left] {\scriptsize $x_i^{(k)}$};
  \fill (xj) circle (1.2pt) node[above right] {\scriptsize $x_j^{(k)}$};
  \draw[->, thin] (xi) -- ++(0.56,0.25) node[above] {\scriptsize $c^{(l)}$};
  \node at ($(COL.north)+(0,0.95)$) {\scriptsize Agent--agent separation};
  \draw[edgeA] ($(COL.north)+(0,1.1)$)
    -- ($(milp.south)+(-1.6,-0.05)$);
\end{scope}

\begin{scope}
  \node[small, anchor=north, minimum width=3.6cm, minimum height=0.5cm, white]
        (OBS) at ($(milp.south)+(1.2,-1.55)$) {};
  \path[obstacle] ($(OBS.center)+(-0.90,-0.35)$) -- ++(1.20,0) -- ++(0.25,0.55) -- ++(-0.90,0.5) -- cycle;
  \draw[very thick, dashed] ($(OBS.center)+(-1.00,0.8)$) -- ($(OBS.center)+(1.00,0.8)$);
  \node at ($(OBS.north)+(0,0.84)$) {\scriptsize Obstacle half‐space ($+\epsilon$)};
  \draw[edgeA] ($(OBS.north)+(0,1.05)$) -- ($(milp.south)+(1.2,-0.05)$);
\end{scope}

\begin{scope}
\node[note, anchor=center] at ($(solveblk.center)+(0.0,1.3)$) {Proposed Formulation \\ for Branch \& Bound};

\coordinate (b0) at ($(solveblk.center)+(0,0.85)$);
\coordinate (b1) at ($(b0)+(-0.65,-0.55)$);
\coordinate (b2) at ($(b0)+( 0.65,-0.55)$);
\coordinate (b3) at ($(b1)+(-0.5,-0.55)$);
\coordinate (b4) at ($(b1)+( 0.5,-0.55)$);
\coordinate (b5) at ($(b2)+(-0.5,-0.55)$);
\coordinate (b6) at ($(b2)+( 0.5,-0.55)$);

\foreach \u/\v in {b0/b1,b0/b2,b1/b3,b1/b4,b2/b5,b2/b6} {\draw (\u)--(\v);}

\node[circle, draw, inner sep=0.7pt, fill=white, label=right:{\tiny root}] at (b0) {};
\node[circle, draw, inner sep=0.7pt, fill=black] at (b1) {};
\node[xshift=-20pt,yshift=2pt] at (b1) {\tiny fathomed at root};
\node[circle, draw, inner sep=0.7pt, fill=white] at (b2) {};
\node at (b3) {\scriptsize$\times$};
\node[xshift=-13pt,yshift=1pt] at (b3) {\tiny infeasible};
\node[circle, draw, inner sep=0.7pt, fill=black] at (b4) {};
\node[xshift=-11pt,yshift=1pt] at (b4) {\tiny pruned};
\node[circle, draw, inner sep=0.7pt, double] at (b5) {};
\node[xshift=14pt,yshift=1pt] at (b5) {\tiny incumbent};
\node[circle, draw, inner sep=0.7pt, fill=black] at (b6) {};
\node[xshift=10pt,yshift=1pt] at (b6) {\tiny pruned};

\node[note, align=center, anchor=center]
  at ($(solveblk.center)+(0,-1)$)
  {\scriptsize $b_{\text{PAAMP}}=\mathcal{O}(\rho N^2TL)$, $\rho<1$\\
   \scriptsize region membership fixed\\
   \scriptsize relevant pairs only\\
   \scriptsize tighter LP bounds};
\end{scope}

\begin{scope}
  \node[small, anchor=center, minimum width=4.2cm, minimum height=2.5cm] (OUTin) at (out.center) {};
  \draw[gray!60] ($(OUTin.center)+(-1.9,-1.1)$) rectangle ($(OUTin.center)+(1.9,1.1)$);
  \draw[very thick] ($(OUTin.center)+(-1.6,-0.7)$) .. controls +(0.5,0.75) and +(-0.5,-0.65) .. ($(OUTin.center)+(1.6,0.6)$);
  \draw[very thick, dash pattern=on 4pt off 3pt] ($(OUTin.center)+(-1.4,0.7)$) .. controls +(0.5,-0.45) and +(-0.5,0.55) .. ($(OUTin.center)+(1.5,-0.6)$);
  \node[note, align=center] at ($(out.south)+(0,0.21)$) {respects dynamics, regions, and safety};
\end{scope}
\end{tikzpicture}
\caption{Overall methodology. The workspace is partitioned into convex regions to form a region-adjacency graph. For each agent, a region sequence $\pi_i$ is generated (sequence-then-solve). Relevant‐pair pruning builds the set $\mathcal P^{(t)}$ of agent pairs to constrain at each time, reducing binary variables by $\rho$. The MILP assembles an $L_1$ path-length plus acceleration objective with boundary, dynamics, region membership, agent-agent separating‐hyperplane, and obstacle facet constraints. A MILP solver returns feasible, collision‐free trajectories; if infeasible, the offending time-region edge is blacklisted and sequences are replanned.}
\label{fig:overall_method}
\end{figure*}

\end{document}